\def\1{\bm{1}}
\def\rvb{{\mathbf{b}}}
\def\rvc{{\mathbf{c}}}
\def\rvu{{\mathbf{i}}}
\def\rvk{{\mathbf{k}}}
\def\rvu{{\mathbf{u}}}
\def\rvv{{\mathbf{v}}}
\def\rvx{{\mathbf{x}}}
\def\rvy{{\mathbf{y}}}
\def\rvz{{\mathbf{z}}}
\def\vmu{{\bm{\mu}}}
\def\vtheta{{\bm{\theta}}}
\def\vepsilon{{\bm{\epsilon}}}
\def\vmu{{\bm{\mu}}}
\def\ve{{\bm{e}}}
\def\vm{{\bm{m}}}
\def\vs{{\bm{s}}}
\def\vw{{\bm{w}}}
\def\vx{{\bm{x}}}
\def\mD{{\bm{D}}}
\def\mI{{\bm{I}}}
\DeclareMathAlphabet{\mathsfit}{\encodingdefault}{\sfdefault}{m}{sl}
\SetMathAlphabet{\mathsfit}{bold}{\encodingdefault}{\sfdefault}{bx}{n}
\newcommand{\pdata}{q_{\rm{data}}}
\newcommand{\E}{\mathop{\mathbb{E}}}
\newcommand{\R}{\mathbb{R}}
\DeclareMathOperator*{\argmin}{arg\,min}
\newcommand{\sender}{q_{_S}}
\newcommand{\out}{p_{_O}}
\newcommand{\rec}{p_{_R}}
\newcommand{\inp}{q_{_I}}
\newcommand{\flow}{q_{_F}}
\newcommand{\update}{q}
\def\vxi{{\bm{\xi}}}
\newcommand*{\dif}{\mathop{}\!\mathrm{d}}
\theoremstyle{plain}
\newtheorem{theorem}{Theorem}[section]
\newtheorem{proposition}[theorem]{Proposition}
\newtheorem{lemma}[theorem]{Lemma}
\theoremstyle{definition}
\theoremstyle{remark}
\newcommand{\cx}[1]{{\color{red}{[\textbf{cx: }#1]}}}
\icmltitlerunning{Unifying Bayesian Flow Networks and Diffusion Models through
Stochastic Differential Equations}
\begin{document}

\twocolumn[
\icmltitle{Unifying Bayesian Flow Networks and Diffusion Models through \\
Stochastic Differential Equations}



\icmlsetsymbol{equal}{*}

\begin{icmlauthorlist}
\icmlauthor{Kaiwen Xue}{equal,renmin}
\icmlauthor{Yuhao Zhou}{equal,thu}
\icmlauthor{Shen Nie}{renmin}
\icmlauthor{Xu Min}{ant}

\icmlauthor{Xiaolu Zhang}{ant}
\icmlauthor{Jun Zhou}{ant}
\icmlauthor{Chongxuan Li}{renmin}
\end{icmlauthorlist}

\icmlaffiliation{renmin}{Gaoling School of AI, Renmin University of China, Beijing, China}
\icmlaffiliation{thu}{Department of Computer Science and Technology, Tsinghua University, Beijing, China}
\icmlaffiliation{ant}{Ant Group, Hangzhou, China}

\icmlcorrespondingauthor{Chongxuan Li}{chongxuanli@ruc.edu.cn}

\newcommand{\fix}{\marginpar{FIX}}
\newcommand{\new}{\marginpar{NEW}}

\icmlkeywords{Machine Learning, ICML}

\vskip 0.3in
]



\printAffiliationsAndNotice{\icmlEqualContribution} 

\begin{abstract}
Bayesian flow networks (BFNs) iteratively refine the parameters, instead of the samples in diffusion models (DMs), of distributions at various noise levels through Bayesian inference. Owing to its differentiable nature, BFNs are promising in modeling both continuous and discrete data, while simultaneously maintaining fast sampling capabilities. This paper aims to understand and enhance BFNs by connecting them with DMs through stochastic differential equations (SDEs). We identify the linear SDEs corresponding to the noise-addition processes in BFNs, demonstrate that BFN's regression losses are aligned with denoise score matching, and validate the sampler in BFN as a first-order solver for the respective reverse-time SDE. 
Based on these findings and existing recipes of fast sampling in DMs, we propose specialized solvers for BFNs that markedly surpass the original BFN sampler in terms of sample quality with a limited number of function evaluations (e.g., 10) on both image and text datasets. Notably, our best sampler achieves an increase in speed of $5\sim20$ times for free. Our code is available at \url{https://github.com/ML-GSAI/BFN-Solver}.
\end{abstract}

\section{Introduction}

\begin{table*}[t]
\caption{Technical contributions of the paper include the theory on unifying BFN and DM (in red) and new samplers for BFN inspired by the theory (in blue). ``SDE-solver1'' means a first-order solver for the corresponding SDE and ``Approx.'' is a shorthand for ``Approximate''.}
\label{tab:contribution}
\vspace{.1cm}
\begin{center}
\begin{small}
\begin{sc}
\begin{tabular}{lcccccc}
\toprule
 & Noise-adding process & Loss function & Original sampler   & New samplers \\
\midrule
BFN on con-  & Corresponding SDE & Equivalent to DSM & SDE-solver1 & BFN-Solvers\\
tinuous data     & {\color{red} Theorem~\ref{thm:cbfn-sde}}  & Trivial & {\color{red} Proposition~\ref{prop:convergence}} & {\color{blue} Algos.~1-3 in Appendix} \\
\midrule
BFN on dis-    &  Corresponding SDE & Equivalent to DSM & Approx. SDE-solver1 & BFN-Solvers \\
 create data   & {\color{red} Theorem~\ref{thm:dbfn-sde}}  & {\color{red} Theorem~\ref{thm:discrete-dsm} } & {\color{red} Proposition~\ref{thm:dbfn-sample}} & {\color{blue} Algos.~4-7 in Appendix} \\
\bottomrule
\end{tabular}
\end{sc}
\end{small}
\end{center}
\vskip -0.1in
\end{table*}

Deep generative models (DGMs) are effective in capturing complex data distributions and producing realistic samples, substantially influencing fields such as computer vision~\cite{rombach2022high,ramesh2022hierarchical,podell2023sdxl} and natural language processing~\cite{brown2020language,gpt4}. The fundamental challenge in DGMs is to represent a flexible probability distribution that facilitates effective parameter learning and efficient inference simultaneously, greatly depending on the data (or modality).

Autoregressive models (ARMs)~\cite{gpt4}, for example, excel in modeling sequential and discrete data (e.g., text) but face limitations in the inference speed, which is proportional to the number of variables. Diffusion models (DMs)~\cite{sohl2015deep,ho2020denoising,song2021scorebased}, on the other hand, better balance generation quality and efficiency with a coarse-to-fine approach. Although considered state-of-the-art in image generation, DMs encounter challenges in handling discrete variables, where score matching algorithms~\cite{hyvarinen2005estimation,vincent2011connection} do not directly apply.

A new class of generative models, Bayesian Flow Networks (BFNs)~\cite{graves2023bayesian}, has been developed recently to overcome these challenges. While inspired by DMs, BFNs distinguish themselves by focusing on iteratively refining the parameters (instead of the samples) of a distribution set at different noise levels through Bayesian inference (see Sec.~\ref{sec:background} for more details). This strategy enables BFNs to facilitate fast sampling and maintain a continuous nature, even when processing discrete data. With carefully designed regression losses, BFNs have shown considerable promise in both image and language modeling. Notably, BFN is primarily developed based on a message-sending process with minimum communication length, and the exact relation between BFNs and DMs remains unclear. 

As summarized in Table~\ref{tab:contribution},  
this paper primarily contributes by unifying BFNs and DMs through stochastic differential equations (SDEs), a pivotal step in understanding their relationship and enhancing BFNs. Initially, by slightly truncating the time, we identify linear SDEs corresponding to the noise-adding processes in BFN on both continuous (see Sec.~\ref{sec:CBCD}) and discrete data (see Sec.~\ref{sec:CBDD}) and derive the reverse-time SDEs for sampling. Note that the SDEs for discrete data operate on a set of latent variables, which the original BFN formulation marginalizes out, rather than distribution parameters.
Furthermore, we demonstrate that, especially on discrete data, BFN's regression losses align with denoising score matching (DSM)~\cite{vincent2011connection} w.r.t. variables in the corresponding SDE, positioning the trained networks to naturally parameterize the reverse-time SDEs. Finally, the original BFN sampler is proven as an (approximate) first-order solver for the corresponding reverse-time SDE. 

The explicit connection between BFNs and DMs brings immediate benefits, particularly in applying fast sampling methods~\cite{lu2022dpm,lu2022dpm++} from DMs to BFNs. We derive the corresponding probability flow ordinary differential equations (ODEs)~\cite{song2021scorebased} for BFNs on both continuous and discrete data. We propose high-order solvers  (named \emph{BFN-Solvers}) tailored to BFNs' special (e.g., semi-linear) structure, for both SDEs and ODEs. Empirically, using the same pre-trained model, our best solver significantly outperforms the original BFN sampler with a few (e.g., $10$) number of function evaluations (NFE) under sample quality on both the CIFAR10 and text8 datasets, achieving a $5\sim20$ times increase in speed for free (see Sec.~\ref{sec:exp} for details).

We believe our discovery offers a rigorous and systematic perspective for analyzing and improving the training and inference processes of BFNs, grounded in the existing results of DMs, and may inspire future work as detailed in Sec.~\ref{sec:conclusion}.

\section{Related Work}
\label{sec:related}

\textbf{Score-baesd DMs.} 
Built upon the score matching algorithms~\cite{hyvarinen2005estimation,vincent2011connection,song2019sliced,pang2020efficient}, DMs~\citep{sohl2015deep,ho2020denoising,song2021scorebased} are currently SOTA to model continuous variables~\citep{dhariwal2021diffusion,chen2020wavegrad, kong2020diffwave,ho2022imagen, singer2022make,poole2022dreamfusion, wang2023prolificdreamer}. In particular, large-scale text-to-image models~\citep{rombach2022high, ramesh2022hierarchical,saharia2022photorealistic, bao2023one,balaji2023ediffi, xue2023raphael,podell2023sdxl} have made remarkable progress and attracted significant attention.   

\textbf{Solvers for DMs.} Since~\citet{song2021scorebased} introduced the SDE and probability flow ODE formulation of DMs, there have been extensive solvers for both SDE~\cite{ho2020denoising, song2021scorebased, karras2022elucidating, lu2022dpm++, bao2022analytic, bao2022estimating, jolicoeur2021gotta, xue2023sa,guo2023gaussian} and ODE~\cite{song2020denoising,liu2022pseudo, lu2022dpm,lu2022dpm++,zhang2022gddim, karras2022elucidating,zhao2023unipc} to improve the sampling process.
In particular, ODE samplers are proven effective with limited NFEs while SDE samplers are robust to prior mismatch~\cite{DBLP:conf/icml/LuZB0LZ22,nie2023blessing} and perform better in a sufficient number of NFEs~\cite{lu2022dpm++}. 


\textbf{Discrete DMs.}
Several DMs have been proposed to model discrete data with discrete states~\cite{sohl2015deep, hoogeboom2021argmax, austin2023structured}, depending on a probability transition matrix. It is nontrivial to leverage the features associated with continuous-state DMs, such as guidance and ODE fast sampling. Efforts have been made to define the score in the discrete state~\cite{lou2023discrete, meng2023concrete, campbell2022continuous, sun2023scorebased}; however, this remains a challenging endeavor. Other works~\cite{chen2023analog, dieleman2022continuous, li2022diffusionlm} have attempted to identify a continuous equivalent for discrete data and apply continuous DMs, but this may result in information loss during the transformation and greatly rely on the noise schedule~\cite{ye2023dinoiser}. \citet{mahabadi2023tess} defines a continuous-time diffusion process on continuous latent variables but is trained with cross-entropy loss rather than regression loss. Several studies~\cite{richemond2022categorical, lou2023reflected} have attempted to establish the diffusion process using SDEs on discrete data. Specifically, \citet{richemond2022categorical} introduced an SDE defined on the probability simplex, but it suffers from intractability in high-dimensional space. \citet{lou2023reflected} proposed a diffusion SDE with an additional boundary constraint, which also increases the complexity of discretization (e.g., requiring thresholding in SDE).

In comparison, this paper reveals that BFNs applied to discrete data solve a linear SDE and are trained using DSM, which aligns seamlessly with continuous DMs. Consequently, without changing the discrete data, BFNs are significantly simpler and more scalable and efficient than the related work, leveraging advancements in continuous DMs.

\section{Background}
\label{sec:background}

In this section, we present the elementary notations and background of DMs and BFNs.

\subsection{Elementary Notations}


We use lowercase letters (e.g., $t$) and boldface lowercase letters (e.g., $\rvx$) to denote scalars and vectors respectively. Variables indexed by uncountable indices are denoted in the form of functions, (e.g., $\beta(t)$ and $\vmu(t)$). Given finite indices (e.g. $\{t_i\}_{i=1}^M$), the corresponding variables are denoted with subscripts (e.g., $\vmu_i$).

\subsection{Score-based DMs}
Score-based DMs~\cite{kingma2021variational} characterize the data distribution
through a diffusion process $\{\rvx(t) \sim \mathcal{N}(\alpha(t) \rvx, \sigma^2(t)\mI)\}$ indexed by a continuous-time variable $t \in [0,T]$ according to an It\^o SDE as follows
\begin{align}
    \dif \rvx = {f}(t)\rvx\dif t + g(t)\dif \vw,\label{eq:bg-sde}
\end{align}
where $\vw$ is the standard Wiener process, and $f(t) = \frac{\dif \log \alpha(t)}{\dif t}$ and $g(t) = \frac{\dif \sigma^2(t)}{\dif t} - \frac{1}{2}\frac{\dif \log \alpha(t)}{\dif t}\sigma^2(t)$ are the drift and diffusion coefficients respectively. For instance, denoising diffusion probabilistic models~\cite{ho2020denoising} consider a process given by the following SDE:
\begin{align}
    \dif \rvx = -\frac{1}{2}\beta(t)\rvx \dif t + \sqrt{\beta(t)} \dif \vw,
\end{align}
where $0 < \beta(t) < 1$. Let $p_t(\rvx)$ denote the marginal density of $\rvx(t)$.  The generative process of score-based DMs is given by a reverse-time SDE~\cite{song2021scorebased, anderson1982reverse}
\begin{align}
    \dif \rvx = [f(t)\rvx - g(t)^2\nabla_\rvx\log p_t(\rvx)]\dif t + g(t)\dif \bar{\vw}\label{eq:bg-reverse-sde},
\end{align}
where $\bar \vw$ is the time-reversed Wiener process. Then the score is parameterized with a time-dependent score-based model $\hat\vs(\rvx, t)$ and trained with the following denoising score matching loss~\cite{vincent2011connection}
\begin{align}
    \mathcal{L}_{\mathrm{DSM}} = \E_{\rvx(t),\rvx(0)} [\|\hat\vs(\rvx(t), t) - \nabla_\rvx\log p_{0t}(\rvx(t) | \rvx(0))\|^2_2],\label{eq:bg-dsm}
\end{align}
where the conditional distribution $p_{0t}(\rvx | \rvx(0))$ is designed as a Gaussian kernel with a closed form score function $\nabla_\rvx\log p_{0t}(\rvx | \rvx(0))$. 
For fast sampling, \citet{song2021scorebased} introduce the corresponding \emph{probability flow ODE} of the reverse SDE in Eq.~\eqref{eq:bg-reverse-sde} as follows
\begin{align}
    \dif \rvx = \left[ {f}(t)\rvx - \frac{1}{2}g(t)^2\nabla_\rvx\log p_t(\rvx) \right] \dif t,\label{eq:bg-ode}
\end{align}
which produces the same data distribution as the corresponding SDE with infinitesimally small stepsize and enjoys a smaller discretization error with a large stepsize due to its deterministic nature~\cite{kloeden1992stochastic}. To solve the ODE in Eq.~\eqref{eq:bg-ode} efficiently, DPM-Solvers~\citep{lu2022dpm,lu2022dpm++} explicitly leverage the semi-linear property of Eq.~\eqref{eq:bg-ode} and further 
simplify it to an exponentially weighted integral of the neural network by applying change-of-variable. Consequently, the exact solution of ODE is given by
\begin{align}
    \rvx(t) = \frac{\alpha(t)}{\alpha(s)}\rvx(s) - \alpha(t)\int^{\lambda(t)}_{\lambda(s)}e^{-\lambda }\hat\ve_{\theta}(\hat\rvx(\lambda), \lambda)\dif \lambda,\label{eq:bg-dpm-solver-solution}
\end{align}
where  $\lambda(t) = \log(\alpha(t)/\sigma(t))$ is the half of the log signal-noise ratio. DPM-Solver solves Eq.~\eqref{eq:bg-dpm-solver-solution} numerically leading to a small discretization error. Taking DPM-Solver1 as an example, given time steps $\{t_i\}_{i=1}^{n}$ and initial value $\rvx_0$, a sequence $\{\rvx_i\}_{i=1}^{n}$ can be solved iteratively as follows:
\begin{align}
    \rvx_i \!=\! \frac{\alpha({t_i})}{\alpha({t_{i-1}})} \rvx_{i-1} \!-\! \sigma({t_i})(e^{h_i} - 1)\hat \vepsilon_\theta(\rvx_{i-1}, t_{i-1}) \!+\!\mathcal{O}(h_i^2),\label{eq:bg-dpm-solver-1}
\end{align}
where $h_i = \lambda(t_{i}) - \lambda(t_{i-1})$. Empirically, DPM-Solver achieves excellent results with a limited number of NFEs and is widely adopted.

\subsection{Bayesian Flow Networks}

Due to the space limit, we briefly present the motivation and formulation of BFNs~\cite{graves2023bayesian} here and please refer to the original paper for more details. Inspired by DMs, BFNs iteratively refine the parameters of a distribution set at different noise levels through Bayesian inference. This strategy enables BFNs to facilitate fast sampling and be differentiable on both continuous and discrete data.

For $D$-dimensional continuous data\footnote{We say $\rvx$ is a continuous data if its distribution has density w.r.t. the Lebesgue measure.} $\rvx \in \mathbb{R}^D$, a continuous-time BFN operates on parameters of a set of Gaussian distributions (of noisy data with different noise levels) with means $\{\vmu(t)\}_{t=0}^{1}$ and covariance matrices $\{\rho(t) \mI\}_{t=0}^{1}$. Equivalently, $\vmu(t)$ can also be regarded as a noisy version of $\rvx$ by injecting a Gaussian noise and follows the distribution
\begin{align}
    \flow(\vmu(t) | \rvx, \gamma(t)) = \mathcal{N}(\gamma(t)\rvx, \gamma(t)(1-\gamma(t))\mI),\label{eq:bg-cbfn}
\end{align}
where $\gamma(t) = 1 - \sigma_1^{2(1-t)}$ is a schedule function\footnote{For a clear alignment with DMs, we adopt a reverse time notation in this paper as originally used by~\citet{graves2023bayesian}. Specifically, the schedule $\gamma(t)$ in our paper is equivalent to $\gamma(1-t)$ in ~\citet{graves2023bayesian}. We retain the other notational conventions for ease of reading, which do not affect our derivations.} and $\sigma_1 \in (0, 1)$ is a hyperparameter. $\rho(t)$ is has a closed form as $\rho(t) = \frac{1}{1-\gamma(t)}$. Similar to DMs, a BFN on continuous data trains a neural network $\hat{\vepsilon}(\vmu(t), t)$ to predict the injected Gaussian noise $\vepsilon$ by minimizing the following loss:
\begin{align}
   \E_{\flow(\vmu(t) | \rvx, \gamma(t)), t\sim U(0, 1)} - \frac{\ln \sigma_1}{\sigma_1^{2t}} \| \vepsilon - \hat{\vepsilon}(\vmu(t), t) \|^2. \label{eq:bg-cbfn-train}
\end{align}
Given time steps $\{t_i\}_{i=0}^n$ and i.i.d.~noises $\{ \rvu_i\}_{i=0}^n \sim \mathcal N(0, \mI)$, the BFN sampler~\cite{graves2023bayesian} iterates as follows.
\begin{align}
    \vmu_i  &=  - \frac{\gamma(t_i) \!-\! \gamma(t_{i-1})}{\sqrt{\gamma(t_{i-1})(1 \!-\!\gamma(t_{i-1}))}} \hat{\vepsilon}(\vmu_{i-1}, t_{i-1}) 
    \!+\! \frac{\gamma(t_i)}{\gamma(t_{i-1})} \vmu_{i-1} \nonumber \\
    & \phantom{{}={}}\!+\! \sqrt{\frac{1\!-\!\gamma(t_i)}{1\!-\!\gamma(t_{i-1})}(\gamma(t_i) \!-\! \gamma(t_{i-1}))} \rvu_i.
    \label{eq:bg-cbfn-sample}
\end{align}
On $D$-dimensional discrete data $\rvx \in \{1,\cdots, K\}^D$, where $K$ is the number of classes, the BFN operates on parameters $\vtheta(t)$ of the multivariate categorical distributions of noisy data.  The distribution of $\vtheta$ is 
$$\flow(\vtheta(t) | \rvx, \beta(t)) = \E_{q(\rvz(t) | \rvx, \beta(t))} \delta(\vtheta(t) - \text{softmax}(\rvz(t))),$$
where $\delta(\cdot)$ is the Dirac distribution, $\rvz(t)$ is a set of latent variables with Gaussian marginal distributions as
\begin{align}
     q(\rvz(t) | \rvx, \beta(t))= \mathcal{N}(\beta(t) \vw_\rvx, K\beta(t)\mI),
      \label{eq:bg-dbfn}
\end{align}
and $\vw_\rvx := K\ve_\rvx - \boldsymbol{1}$, $\ve_\rvx :=\{\ve_{\rvx^{(1)}},\cdots,\ve_{\rvx^{(D)}}\}\in \mathbb{R}^{KD}$ where $\ve_j$ is the one-hot vector defined by $(\ve_{j})_k = \delta_{\rvx_{j}k}$ and $\boldsymbol{1}$ is a vector of length $KD$ filled with ones.  $\beta(t) = (1-t)^2 \beta_1$ is a schedule function with a hyperparameter $\beta_1>0$. A BFN on discrete data trains a neural network $\hat{\ve}(\vtheta(t), t)$ that predicts the data in a one-hot form given noisy inputs using the following regression loss
\begin{align}
\mathcal{L}^{\infty}(\rvx) \!=\! \E_{\flow(\vtheta|\rvx, t), t\sim U(0, 1)}K\beta_1t  \| \ve_\rvx - \hat{\ve}(\vtheta(t), t)\|^2. \label{eq:bg-dbfn-train}
\end{align}
Let $\{ \rvu_i\}_{i=0}^n \sim \mathcal N(0, \mI)$ be independent and use $\hat{\ve}_s(\rvz(t), t)$ as a shorthand for $\hat{\ve}(\text{softmax}(\rvz(t)), t)$.  The sampling rule of BFN~\cite{graves2023bayesian} can be written as follows
\begin{align}
    \ve_k &\sim \text{Cat}(\hat{\ve}_s(\rvz_{i-1}, t_{i-1})), 
    \label{eq:bg-dbfn-cat-sample}
    \\
    \rvz_i &=\rvz_{i-1} + \alpha_i (K \ve_k - 1) + \sqrt{K\alpha_i} \rvu_i,
    \label{eq:bg-dbfn-main-sample}
\end{align}
where $\alpha_i = \beta(t_i) - \beta(t_{i-1})$ and $\text{Cat}$ represents the one-hot categorical distribution.\footnote{Originally, \citet{graves2023bayesian} obtain samples through $\vtheta(t)$, while we present the equivalent form in terms of $\rvz(t)$ for convenience.}

Based on the formulation, BFNs have shown considerable promise in both image and language modeling.
Although inspired by DMs, and the exact relation between BFNs and DMs remains unclear. To this end, this paper unifies them through stochastic differential equations (SDEs) for understanding and accelerating BFNs on both continuous data (see Sec.~\ref{sec:CBCD}) and discrete data (see Sec.~\ref{sec:CBDD}).

\begin{figure}
    \centering
    \includegraphics[scale=0.8]{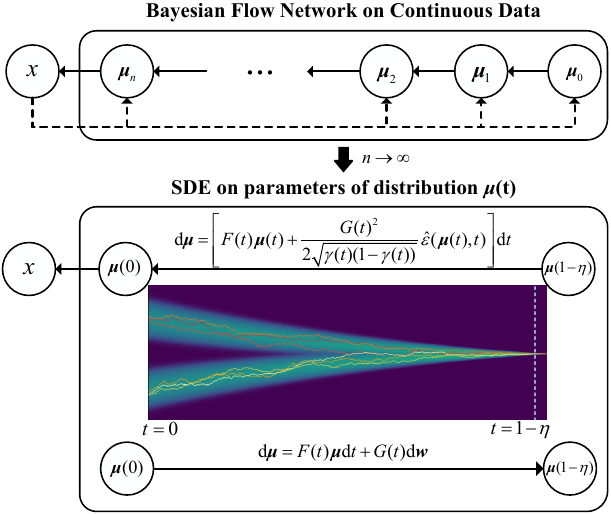}
    \vspace{-.15cm}
    \caption{\textbf{Illustration of BFN on continuous data and the corresponding SDEs.} The SDEs are defined w.r.t. $\vmu$ on time $[0, 1-\eta]$.}
    \label{fig:dag-continuous}
    \vspace{-.15cm}
\end{figure}

\section{Continuous-time BFN on Continuous Data}
\label{sec:CBCD}


This section bridges BFNs on continuous data with DMs by establishing a linear SDE for noise modeling in BFN (Sec.~\ref{sec:cbfn-as-sde}), aligning training objectives with DSM (Sec.~\ref{sec:cbfn-train}), and validating the sampler as discretization of the reverse-time SDE (Sec.~\ref{sec:cbfn-sample}). Further,  fast samplers are developed based on the recipe in DMs in Sec.~\ref{sec:cbfn-ode}. 

\subsection{Formulating BFN on Continuous Data as SDEs}
\label{sec:cbfn-as-sde}

As illustrated in Fig.~\ref{fig:dag-continuous}, we establish that the (truncated) noise-adding process of the continuous-time BFN on continuous data in Eq.~\eqref{eq:bg-cbfn} uniquely solves a linear SDE, summarized as follows.

\begin{theorem}[Proof in Appendix \ref{proof:sde}]
\label{thm:cbfn-sde}
  Let $\eta > 0$ be an arbitrarily small constant. The BFN in Eq.~\eqref{eq:bg-cbfn} at time $[0, 1-\eta]$ is the unique solution of the following linear SDE:
\begin{align}
    \dif \vmu = F(t) \vmu \dif t + G(t) \dif \vw.
    \label{eq:proof_sde}
\end{align}
Here $\vw$ is a standard Wiener process and
\begin{align}
    F(t) &= \frac{\gamma^\prime(t)}{\gamma(t)} = 2 \frac{\sigma_1^{2(1-t)}}{1-\sigma_1^{2(1-t)}}\ln \sigma_1, \\
    G(t)^2 &= -\gamma^\prime(t) =  -2  {\sigma_1^{2(1-t)}}\ln \sigma_1,
\end{align}
where $\sigma_1 \in (0, 1)$ is the hyperparameter defined in Eq.~\eqref{eq:bg-cbfn}.
\end{theorem}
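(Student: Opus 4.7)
The plan is to exploit the fact that a linear SDE of the form $d\vmu = F(t)\vmu \, dt + G(t)\, d\vw$ with Gaussian initial condition has a Gaussian marginal at every time, whose mean and covariance evolve according to linear ODEs. I would therefore match those ODEs against the mean $\gamma(t)\rvx$ and variance $\gamma(t)(1-\gamma(t))\mI$ prescribed in Eq.~\eqref{eq:bg-cbfn}, and read off $F(t)$ and $G(t)$. Since the target marginal is isotropic, it is enough to work coordinatewise with scalar $F$ and $G$.

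First, I would initialize the SDE at $t=0$ with $\vmu(0) \sim \mathcal{N}(\gamma(0)\rvx,\; \gamma(0)(1-\gamma(0))\mI)$, so the initial marginal already agrees with the BFN flow. Then I would write down the well-known ODE system satisfied by the mean $m(t)$ and variance $v(t)$ of the linear SDE,
\begin{align}
    m'(t) &= F(t)\, m(t),\\
    v'(t) &= 2F(t)\, v(t) + G(t)^2.
\end{align}
Substituting $m(t) = \gamma(t)\rvx$ immediately yields $F(t) = \gamma'(t)/\gamma(t)$, valid as long as $\gamma(t) \neq 0$. Substituting $v(t) = \gamma(t)(1-\gamma(t))$ and the formula for $F(t)$ into the variance equation gives, after a short simplification, $G(t)^2 = -\gamma'(t)$. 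Finally, a direct computation from the explicit schedule $\gamma(t) = 1 - \sigma_1^{2(1-t)}$ produces $\gamma'(t) = 2\sigma_1^{2(1-t)}\ln \sigma_1$ (which is negative, since $\sigma_1 \in (0,1)$, so $G(t)^2 > 0$ is consistent), and plugging in delivers the closed forms stated in the theorem.

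The truncation to $[0, 1-\eta]$ is essential because $\gamma(1) = 0$ makes $F(t) = \gamma'(t)/\gamma(t)$ singular at $t=1$; on $[0, 1-\eta]$ both $F$ and $G$ are continuous and bounded, so the drift and diffusion coefficients are globally Lipschitz in $\vmu$ with bounded time-dependent constants. Standard existence-and-uniqueness theorems for It\^o SDEs with linear coefficients then guarantee that the SDE has a unique strong solution on $[0, 1-\eta]$, which moreover remains Gaussian. Since the mean and covariance of that solution coincide with those of $\flow(\vmu(t)\mid \rvx, \gamma(t))$ at every $t \in [0, 1-\eta]$ by construction, the two Gaussian laws agree.

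The main obstacle I anticipate is not any single calculation but the cleanliness of the matching-plus-uniqueness argument: one must verify that matching marginals together with the prescribed initial condition genuinely characterizes the process (as opposed to merely its one-time marginals). The cleanest route is to argue that both processes are strong solutions of the same linear SDE with the same (Gaussian) initial distribution, and invoke pathwise uniqueness; showing that the BFN flow, originally defined through Bayesian updates rather than as an It\^o diffusion, actually solves Eq.~\eqref{eq:proof_sde} in the It\^o sense is the subtle point and is what forces the truncation by $\eta$.
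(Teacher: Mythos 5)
Your proposal is correct, and it reaches the theorem by a slightly different mechanism than the paper. The paper's proof constructs the solution of Eq.~\eqref{eq:proof_sde} explicitly: it introduces the integrating factor $e^{-\int_0^t F(\tau)\dif\tau}$, applies It\^o's formula and It\^o's isometry to get the Gaussian transition kernel $\vmu(t)\mid\vmu(0)$, and then composes it with the initial law $\vmu(0)\mid\rvx$ from Eq.~\eqref{eq:bg-cbfn} at $t=0$ to recover $\mathcal N(\gamma(t)\rvx,\gamma(t)(1-\gamma(t))\mI)$; it then notes, as you do, that on $[0,1-\eta]$ the coefficients are well behaved (indeed $F=\gamma'/\gamma$ blows up only as $t\to 1$, where $\gamma\to 0$, consistent with your reading), which gives uniqueness by standard linear-SDE theory. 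You instead invoke the mean/covariance ODEs $m'=Fm$, $v'=2Fv+G^2$ for a linear SDE with Gaussian initial data and match them against $\gamma(t)\rvx$ and $\gamma(t)(1-\gamma(t))$; your algebra ($F=\gamma'/\gamma$, $G^2=v'-2Fv=-\gamma'$, and $\gamma'(t)=2\sigma_1^{2(1-t)}\ln\sigma_1<0$) is right and even derives the coefficients rather than merely verifying them. What each buys: your route is shorter and makes the coefficient identification transparent, but it leans on the "well-known" fact that the solution is Gaussian with moments obeying those ODEs --- a fact whose standard proof is precisely the integrating-factor/It\^o-isometry computation the paper carries out, so a self-contained write-up would still include or cite that derivation. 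Finally, the obstacle you flag at the end (marginals versus process-level identification) is not actually an issue at the level the theorem is stated: Eq.~\eqref{eq:bg-cbfn} specifies only the time-$t$ laws of $\vmu(t)$ given $\rvx$, and the paper's own proof likewise establishes exactly (i) well-posedness of the SDE on $[0,1-\eta]$ and (ii) agreement of the conditional marginals, which is what your argument delivers as well.
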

The time $t$ is truncated by $1 - \eta$ in Theorem~\ref{thm:cbfn-sde} for two reasons. On one hand, the reverse-time SDE derived later (see Eq.~\eqref{eq:cbfn-para-reverse-sde}) is ill-defined at $t = 1$ since the distribution of $\vmu$ collapses to a Dirac distribution whose score tends to infinity.
On the other hand, it is convenient to satisfy certain regularity conditions for the uniqueness of the solution in Theorem~\ref{thm:cbfn-sde}, as detailed in the proof. The exact distribution of $\vmu(1-\eta)$ is unknown. We approximate it by an isotropic Gaussian with zero mean and small variance, as $p(\vmu(t))$ tends towards a Dirac delta function when $t \to 1$. (see details in Sec.~\ref{sec:exp}). As $\eta$ is small (e.g., $10^{-3} \sim 10^{-5}$) in our implementation, the effect of truncation is negligible. 

Here a linear SDE also applies to the latent variable $\rvz$, a linear transformation of $\vmu$ in Eq.~\eqref{eq:bg-cbfn}.
The choice of $\vmu$ in Theorem~\ref{thm:cbfn-sde} aligns with the sampling process in BFN~\cite{graves2023bayesian}, facilitating a later analysis in Sec.~\ref{sec:cbfn-sample}. 

The finding in Theorem~\ref{thm:cbfn-sde} directly connects to DMs~\cite{song2021scorebased,kingma2021variational}, which are formulated as an SDE in Eq.~\eqref{eq:bg-sde} with a different noise schedule. 
We believe this may inspire new classes of BFNs and leave a systematic comparison of the schedules for future work.

Similar to Eq.~\eqref{eq:bg-reverse-sde}, the linear SDE in Eq.~\eqref{eq:proof_sde} has an associated reverse-time SDE~\cite{anderson1982reverse,song2021scorebased} in $[0, 1- \eta]$ for generative modeling:
\begin{align}
    \dif\vmu = [F(t)\vmu - G(t)^2\nabla_{\vmu} \log p_t(\vmu)] \dif t + G(t) \dif\bar{\vw}, \label{eq:cbfn-reverse-sde}
\end{align}
where $\nabla_{\vmu} \log p_t(\vmu)$ is the (time-conditional) score function to be estimated and $\bar{\vw}$ is the time-reversed Wiener process.

\subsection{Training as Parameterizing the Reverse-time SDE}
\label{sec:cbfn-train}

The continuous-time BFN on continuous data trains a neural network to optimize the mean square error in Eq.~\eqref{eq:bg-cbfn-train}, which directly aligns with the widely employed DSM loss in Eq.~\eqref{eq:bg-dsm}. In other words, BFN equivalently parameterizes the reverse-time SDE in Eq.~\eqref{eq:cbfn-reverse-sde} by estimating the time-conditional score function  as 
\begin{align} 
\hat{\vs}(\vmu(t), t) = -  \frac{1}{\sqrt{\gamma(t)(1-\gamma(t))}}\hat{\vepsilon}(\vmu(t), t), \label{eq:cbfn-rep}
\end{align}
where $\hat{\vs}(\vmu(t), t)$ and $\hat{\vepsilon}(\vmu(t), t)$ denote the estimate of the score function and the network trained by BFN, respectively, and $\gamma(t)$ follows Eq.~\eqref{eq:bg-cbfn}.

\subsection{Sampling as Discretizing the Reverse-time SDE}
\label{sec:cbfn-sample}

Plugging Eq.~\eqref{eq:cbfn-rep} into Eq.~\eqref{eq:cbfn-reverse-sde}, we get a parameterized reverse-time SDE in $[0, 1-\eta]$ for sampling as follows
\begin{align}
    \dif\vmu \!= \! \left[ \! F(t)\vmu(t) \!+\! \frac{G(t)^2\hat{\vepsilon}(\vmu(t), t)}{ \sqrt{\gamma(t)(1-\gamma(t))}} \!\right] \! \dif t \! + \! G(t) \dif\bar{\vw},
    \label{eq:cbfn-para-reverse-sde}
\end{align}
which is ill-defined at $t=1$ because $\lim_{t \rightarrow 1}\gamma(t) = 1$.
Interestingly, even without an explicit SDE formulation, the sampler proposed in the original BFN paper discretizes the reverse-time SDE, as characterized in the following Proposition~\ref{prop:convergence}. 
\begin{proposition}[Proof in Appendix~\ref{append:ancestral_sampling}]
\label{prop:convergence}
The BFN sampler in Eq.~\eqref{eq:bg-cbfn-sample} is a first-order discretization of an equivalent form of the parameterized reverse-time SDE in Eq.~\eqref{eq:cbfn-para-reverse-sde}.
\end{proposition}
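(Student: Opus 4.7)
The plan is to obtain the BFN sampler \eqref{eq:bg-cbfn-sample} as the first-order exponential-Euler discretization of the parameterized reverse-time SDE \eqref{eq:cbfn-para-reverse-sde}. The ``equivalent form'' I would target is the integral equation produced by applying the integrating factor $\phi(t) := \gamma(t_{i-1})/\gamma(t)$: since $\phi'(t) + F(t)\phi(t) = 0$, applying It\^o's product rule to $\phi(t)\vmu(t)$ absorbs the linear drift $F(t)\vmu$, and integrating from $t_{i-1}$ down to $t_i$ (with $t_i < t_{i-1}$ as sampling runs in reverse time) produces the prefactor $\gamma(t_i)/\gamma(t_{i-1})$ on $\vmu(t_{i-1})$ exactly, matching the corresponding coefficient of the BFN sampler without any approximation.

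The first-order step then freezes $\hat{\vepsilon}(\vmu(s), s) \approx \hat{\vepsilon}(\vmu_{i-1}, t_{i-1})$ across the interval. Substituting $G(s)^2 = -\gamma'(s)$ and changing variables $u = \gamma(s)$, the remaining deterministic integral reduces to $\gamma(t_{i-1}) \int_{\gamma(t_{i-1})}^{\gamma(t_i)} -\dif u/[u\sqrt{u(1-u)}]$, whose antiderivative is $2\sqrt{(1-u)/u}$. Taylor-expanding the resulting closed form around $u = \gamma(t_{i-1})$ collapses the $\hat{\vepsilon}$-coefficient to $-(\gamma(t_i) - \gamma(t_{i-1}))/\sqrt{\gamma(t_{i-1})(1-\gamma(t_{i-1}))}$, matching the BFN sampler. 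For the stochastic term, It\^o isometry yields its variance as $\gamma(t_i)(\gamma(t_i) - \gamma(t_{i-1}))/\gamma(t_{i-1})$, which to leading order in $\Delta\gamma := \gamma(t_i) - \gamma(t_{i-1})$ equals $\Delta\gamma$ and therefore agrees at first order with the BFN noise variance $(1-\gamma(t_i))\Delta\gamma/(1-\gamma(t_{i-1})) = \Delta\gamma + O(\Delta\gamma^2)$. Assembling the three contributions, Eq.~\eqref{eq:bg-cbfn-sample} coincides with the exponential-Euler discretization of the equivalent integral form up to $O(h^2)$ terms in the step size $h = t_{i-1} - t_i$.

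The main obstacle is routine rather than conceptual: verifying the non-elementary antiderivative $\int \dif u/[u\sqrt{u(1-u)}] = -2\sqrt{(1-u)/u}$ (which I would confirm via the substitution $v = \sqrt{1-u}$, reducing it to $\int 2\dif v/(1-v^2)^{3/2} = 2v/\sqrt{1-v^2}$), and tracking signs consistently under the backward-in-time sampling direction, noting that $\gamma$ is monotone decreasing in $t$ so $\gamma'(t) < 0$ while $\Delta\gamma > 0$. Once these bookkeeping matters are settled, collecting Taylor-expansion terms at $t_{i-1}$ renders the coincidence of the BFN sampler with the first-order discretization of the equivalent SDE immediate.
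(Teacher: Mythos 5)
Your proposal is essentially correct, but it takes a genuinely different route from the paper's proof. You stay in the noise-prediction parameterization and apply the integrating factor built from $\gamma$, so the linear drift $F(t)\vmu$ is absorbed exactly and the remaining network integral (after freezing $\hat{\vepsilon}$ at $t_{i-1}$) is evaluated in closed form via the antiderivative $2\sqrt{(1-u)/u}$; the BFN coefficients in Eq.~\eqref{eq:bg-cbfn-sample} are then recovered only after Taylor-expanding around $\gamma(t_{i-1})$, i.e., modulo $O(\Delta\gamma^2)$ terms in the drift coefficients and $O(\Delta\gamma^{3/2})$ in the noise term, which suffices for the ``first-order discretization'' claim but does not identify the BFN update exactly. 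The paper instead first reparameterizes the SDE \eqref{eq:cbfn-para-reverse-sde} in terms of the data prediction $\hat{\rvx}(\vmu,t)=\vmu/\gamma(t)-\sqrt{(1-\gamma(t))/\gamma(t)}\,\hat{\vepsilon}(\vmu,t)$, which changes the semi-linear structure so that the integrating factor is $1-\gamma$; freezing $\hat{\rvx}$ and computing both the deterministic and It\^o integrals exactly then reproduces Eq.~\eqref{eq:bg-cbfn-sample} \emph{coefficient-for-coefficient} once converted back to $\hat{\vepsilon}$, with no Taylor expansion needed, and additionally exhibits the BFN sampler as a first-order DPM-Solver++-type (data-prediction) scheme. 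So the trade-off is: your argument avoids the reparameterization at the cost of extra expansion bookkeeping and a weaker (up-to-higher-order) match; the paper's choice of ``equivalent form'' makes the match exact. One minor slip to fix: in your variation-of-constants solution the prefactor multiplying the frozen-network integral should be $\gamma(t_i)$ (the target time), not $\gamma(t_{i-1})$ — harmless at first order since the two differ by $\Delta\gamma$, but it should be stated correctly; your antiderivative, sign conventions, and the variance calculations ($\gamma(t_i)\Delta\gamma/\gamma(t_{i-1})$ versus BFN's $(1-\gamma(t_i))\Delta\gamma/(1-\gamma(t_{i-1}))$, both $\Delta\gamma+O(\Delta\gamma^2)$) check out.
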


\subsection{Probability Flow ODE and Faster Sampling}
\label{sec:cbfn-ode}

Establishing an explicit connection between BFNs and DMs through SDEs yields an immediate and significant benefit: the fast sampling recipe from DMs directly applies to BFN.
 
Formally, according to Eq.~\eqref{eq:bg-ode}, we obtain the following equivalent probability flow ODE of 
the parameterized reverse-time SDE of Eq.~\eqref{eq:cbfn-para-reverse-sde}:
\begin{align}
    \dif \vmu \!= \! \left[F(t)\vmu(t)  \! +  \!\frac{G(t)^2}{ 2 \sqrt{\gamma(t)(1-\gamma(t))}}\hat{\vepsilon}(\vmu(t), t) \right] \dif t.\label{eq:cbfn-para-ode}
\end{align}

Further, we propose \emph{BFN-Solver}, a customized ODE solver for BFN in analogy to DPM-Solver in Eq.~\eqref{eq:bg-dpm-solver-1}. As detailed in Appendix~\ref{app:cts_bfn_solver}, we integrate all linear terms and apply a change of variable from $t$ to $\lambda(t) = \frac{1}{2}\log\frac{\gamma(t)}{1-\gamma(t)}$ to obtain a simplified exact solution of Eq.~\eqref{eq:cbfn-para-ode}
\begin{align}
    \vmu(t) \!=\! \frac{\gamma(t)}{\gamma(s)}\vmu(s) \!-\! \gamma(t) \int_{\lambda(s)}^{\lambda(t)} e^{-\lambda} \hat{\vepsilon}(\vmu(t_\lambda(\lambda)), t_\lambda(\lambda)) \dif \lambda, \label{eq:cts_ode_exact}
\end{align}
where $t_{\lambda}(\cdot)$ is the inverse function of $\lambda(t)$ for $0 \le t < s < 1-\eta$. Eq.~\eqref{eq:cts_ode_exact} differs from Eq.~\eqref{eq:bg-dpm-solver-solution} only in certain coefficients.
Given an initial value $\vmu_0$ and time steps $\{t_i\}_{i=0}^{n}$ from $t_0 = 1-\eta$ to $t_n = 0$, BFN-Solver1 is derived similarly to Eq.~\eqref{eq:bg-dpm-solver-1} and given by
\begin{align}
     \vmu_{i} =  & - \sqrt{\gamma(t_i)(1-\gamma(t_i))} (e^{h_i} - 1) \hat{\vepsilon}(\vmu_{i-1}, t_{i-1})  \nonumber \\ 
    & + \frac{\gamma(t_{i})}{\gamma(t_{i-1})}\vmu_{i-1},
\end{align}
where $h_i = \lambda(t_i) - \lambda(t_{i-1})$.
\textit{BFN-Solver++} shares the same spirit with BFN-Solver and the difference is that BFN-Solver++ considers data prediction instead of noise prediction. We refer the readers to Appendix~\ref{app:cts_bfn_solver} for higher-order solvers of both ODE and SDE.\footnote{A more straightforward way to get BFN-Solver on continuous data is to treat BFN as a DM with a special noise schedule $\alpha_t = \gamma_t$ and $\sigma^2_t = \gamma_t(1 - \gamma_t)$. However, it is infeasible on discrete data. Therefore, we use a slightly complex yet coherent way to derive BFN-Solver throughout the paper.} 

Empirically, as presented in Sec.~\ref{sec:exp_fast_continous}, BFN-Solvers of different orders significantly outperform the original BFN sampler with a limited number of NFEs based on the same model.

\section{Continuous-time BFN on Discrete Data}
\label{sec:CBDD}

In a manner akin to Sec.~\ref{sec:CBCD}, this section unifies BFNs on discrete data and (continuous) DMs through SDEs and develops fast samplers for BFNs. 
However, this adaptation to discrete data is far from straightforward, as it involves SDEs operating on latent variables $\rvz$ — a significant departure from the original BFN formulation that marginalizes out these variables, rather than updating the distribution parameters $\vtheta$. Consequently, it is surprising that the training and sampling of BFN on discrete data still connect to the SDE formulation on $\rvz$.

\begin{figure}
    \centering
    \includegraphics[scale=0.8]{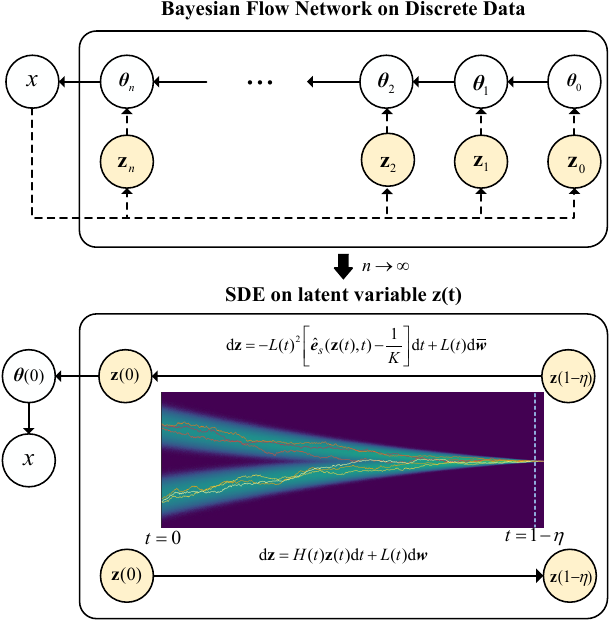}
    \vspace{-.15cm}
    \caption{\textbf{Illustration of BFN on discrete data and the corresponding SDEs.} The SDEs are defined w.r.t. the latent variables $\rvz$, which are marginalized in BFN, on time $[0, 1-\eta]$.}
    \label{fig:dag-discrete}
    \vspace{-.15cm}
\end{figure}

\subsection{Formulating BFN on Discrete Data as SDEs}
\label{sec:dbfn-as-sde}

Similar to Theorem~\ref{thm:cbfn-sde}, the truncated noise-adding process of the continuous-time BFN on discrete data in Eq.~\eqref{eq:bg-dbfn} uniquely solves a linear SDE, summarized as follows.
\begin{theorem}[Proof in Appendix~\ref{append:dbfn_sde}]
\label{thm:dbfn-sde}
    Let $\eta > 0$ be an arbitrarily small constant. The BFN in Eq.~\eqref{eq:bg-dbfn} with $t \in [0, 1-\eta]$ is the unique solution of the following linear SDE:
\begin{align}
    \dif \rvz = H(t) \rvz \dif t + L(t) \dif \vw. \label{eq:y_sde-main}
\end{align}
Here $\vw$ is a standard Wiener process and 
\begin{align}
    H(t) & = \frac{\beta^\prime(t)}{\beta(t)} = -\frac{2}{1-t}, \\
    L(t)^2 &= -K\beta^\prime(t) = 2K\beta_1(1-t), \label{eqn:discrete-diffusion-coeff}
\end{align}
where $K$ and $\beta_1$ are hyperparameters defined in Eq.~\eqref{eq:bg-dbfn}.
\end{theorem}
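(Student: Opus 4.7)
The plan is to mirror the argument used for Theorem~\ref{thm:cbfn-sde}, now working with the latent variable $\rvz$ rather than the parameter $\vmu$. The key observation is that the BFN marginal $q(\rvz(t) \mid \rvx, \beta(t)) = \mathcal{N}(\beta(t)\vw_\rvx, K\beta(t)\mI)$ is Gaussian with isotropic covariance, and a linear SDE of the form $d\rvz = H(t)\rvz\, dt + L(t)\, d\vw$ started from a Gaussian initial condition produces a Gaussian process whose per-coordinate mean $\vm(t)$ and variance $v(t)$ evolve deterministically via $\vm'(t) = H(t)\vm(t)$ and $v'(t) = 2H(t)v(t) + L(t)^2$. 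So the strategy is to reverse-engineer $H$ and $L$ by matching these moment ODEs to the prescribed BFN marginals.

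First, I would match moments. Requiring $\vm(t) = \beta(t)\vw_\rvx$ in the mean ODE gives $H(t) = \beta'(t)/\beta(t) = -2/(1-t)$ after substituting $\beta(t) = (1-t)^2\beta_1$. Plugging this into the variance ODE with $v(t) = K\beta(t)$ yields
\begin{align*}
    L(t)^2 = K\beta'(t) - 2H(t)\, K\beta(t) = -K\beta'(t) = 2K\beta_1(1-t),
\end{align*}
exactly the claimed coefficients. As a cross-check, the variation-of-parameters formula
\begin{align*}
    \rvz(t) = \Phi(t,0)\rvz(0) + \int_0^t \Phi(t,s)\, L(s)\, d\vw(s),
\end{align*}
with transition kernel $\Phi(t,s) = (1-t)^2/(1-s)^2$ and initial condition $\rvz(0) \mid \rvx \sim \mathcal{N}(\beta_1 \vw_\rvx, K\beta_1 \mI)$, directly reproduces the mean $\beta(t)\vw_\rvx$ and the variance $K\beta(t)\mI$ after evaluating the It\^o isometry.

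Second, I would address existence and uniqueness. On the truncated interval $[0, 1-\eta]$, both $H$ and $L$ are smooth and bounded, so the linear SDE satisfies the standard Lipschitz and linear-growth conditions; classical SDE theory then guarantees a unique strong solution given a square-integrable initial condition. Since the moment-matched Gaussian process and this strong solution are both Gaussian with identical mean and covariance at every $t$, they coincide in law as processes, which gives the uniqueness claim.

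The main obstacle is the singularity of $H(t)$ and the degeneracy of the marginal at $t = 1$: as $t \to 1$, $H(t) \to -\infty$ and $\rvz(t)$ collapses to a Dirac mass at $0$, so neither Lipschitz bounds nor a well-defined score exist at the endpoint. Truncating to $[0, 1-\eta]$ is precisely the maneuver that sidesteps this singularity, keeping all coefficients bounded and the standard machinery applicable. A secondary subtlety, absent in the continuous-data case, is that $\rvz$ is an auxiliary latent variable which the original BFN formulation marginalizes out; deriving an SDE on $\rvz$ is nonetheless legitimate because $\vtheta(t) = \mathrm{softmax}(\rvz(t))$ is a deterministic function of $\rvz(t)$, so $\rvz$ fully captures the randomness in the noise-adding process and inherits (through the stated Gaussian marginal $q(\rvz(t) \mid \rvx, \beta(t))$) all the information needed to characterize the flow on $\vtheta$.
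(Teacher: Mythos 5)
Your proposal is correct and follows essentially the same route as the paper: the moment-ODE matching plus the variation-of-parameters cross-check with It\^o isometry is exactly the paper's change-of-variables computation $\tilde\rvz(t)=e^{-\int_0^t H(\tau)\dif\tau}\rvz(t)$, composed with the initial condition $\rvz(0)\mid\rvx\sim\mathcal N(\beta_1\vw_\rvx,K\beta_1\mI)$, and your boundedness-of-coefficients argument on $[0,1-\eta]$ matches the paper's handling of uniqueness. The only cosmetic difference is that you reverse-engineer $H$ and $L$ from the prescribed marginals rather than verifying the stated coefficients directly, which changes nothing of substance (note only that matching time-marginals suffices here, since the BFN is specified by its marginals, so the stronger phrase ``coincide in law as processes'' is unnecessary).
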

The rationale for truncation of $t$ and the way to deal with $\eta$ and $\rvz(1-\eta)$ is similar to the continuous data case, detailed in the proof and Sec.~\ref{sec:exp_setting}, respectively. 

Notably, Theorem~\ref{thm:dbfn-sde} characterizes the dynamics of $\rvz$ instead of $\vtheta$, 
as illustrated in Fig.~\ref{fig:dag-discrete}. Indeed, the dynamics of $\vtheta$ do not correspond to a linear SDE as $\vtheta$ is a nonlinear transformation of $\rvz$ as shown in Eq.~\eqref{eq:bg-dbfn}. It is implied that the original sampling process in Eq.~\eqref{eq:bg-dbfn-main-sample} \emph{does not directly} discretize the linear SDE, as detailed in Sec.~\ref{sec:dbfn-sample}.

The associated reverse-time SDE~\cite{song2021scorebased} for the linear SDE in Eq.~\eqref{eq:y_sde-main} in $[0, 1-\eta]$  is given by
\begin{align}
    \dif\rvz = [H(t)\rvz - L(t)^2\nabla_{\rvz} \log p_t(\rvz)] \dif t + L(t) \dif\bar{\vw}, \label{eq:dbfn-reverse-sde}
\end{align}
where $\nabla_{\rvz} \log p_t(\rvz)$ is the unknown score function, defined on $\rvz$ instead of $\vtheta$. 

\subsection{Training as Parameterizing the Reverse-time SDE}
\label{sec:dbfn-train}

It is nontrivial to see yet can be proven that the training objective of the continuous-time BFN on discrete data in Eq.~\eqref{eq:bg-dbfn-train} is a reparameterized form of DSM~\cite{vincent2011connection} w.r.t. $\rvz$, as summarized in the following Theorem~\ref{thm:discrete-dsm}.
\begin{theorem}[Proof in Appendix~\ref{app:discrete-dsm}] \label{thm:discrete-dsm}
    Minimizing the continuous-time loss of BFN on discrete data in Eq.~\eqref{eq:bg-dbfn-train} is equivalent to minimizing the DSM loss in Eq.~\eqref{eq:bg-dsm}. Besides, the corresponding estimate of the score function is given by
    \begin{align}
        \hat{\vs}(\rvz(t), t) = - \frac{\rvz(t)}{K\beta(t)} + \hat{\ve}_s(\rvz(t), t) - \frac{1}{K}, \label{eq:dbfn-dsm}
    \end{align}
    where $\hat{\ve}_s(\rvz(t), t) $ is the network trained by BFN.
\end{theorem}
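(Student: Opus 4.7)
\emph{Approach.} The plan is to verify that, under the score parameterization in Eq.~\eqref{eq:dbfn-dsm}, the BFN regression loss in Eq.~\eqref{eq:bg-dbfn-train} coincides, up to a strictly positive $t$-dependent weight, with the DSM loss in Eq.~\eqref{eq:bg-dsm} instantiated on the SDE of Theorem~\ref{thm:dbfn-sde}. Since positive reweighting along $t$ leaves the pointwise minimizer of a squared-error objective unchanged, this suffices to conclude equivalence of the two optimization problems and to read off the score estimator.

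\emph{Step 1: conditional score of $\rvz$.} First I would write down the log density of the Gaussian kernel $q(\rvz(t)\mid \rvx,\beta(t)) = \mathcal{N}(\beta(t)\vw_\rvx, K\beta(t)\mI)$ from Eq.~\eqref{eq:bg-dbfn} and differentiate in $\rvz$ to obtain
\begin{equation*}
    \nabla_{\rvz}\log q(\rvz(t)\mid\rvx) = -\frac{\rvz(t)-\beta(t)\vw_\rvx}{K\beta(t)} = -\frac{\rvz(t)}{K\beta(t)} + \frac{\vw_\rvx}{K}.
\end{equation*}
Substituting $\vw_\rvx = K\ve_\rvx - \boldsymbol{1}$ simplifies the right-hand side to $-\rvz(t)/(K\beta(t)) + \ve_\rvx - 1/K$, an affine function of $\rvz$ whose offset is precisely the ``target'' $\ve_\rvx$ shifted by $1/K$.

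\emph{Step 2: substitute the parameterization and match weights.} Next I would plug the proposed score estimator of Eq.~\eqref{eq:dbfn-dsm} into the DSM residual. The $\rvz$-linear piece and the $-1/K$ offset in $\hat{\vs}$ cancel exactly against the corresponding terms of the true conditional score, leaving
\begin{equation*}
    \hat{\vs}(\rvz(t),t) - \nabla_{\rvz}\log q(\rvz(t)\mid\rvx) = \hat{\ve}_s(\rvz(t),t) - \ve_\rvx = \hat{\ve}(\vtheta(t),t) - \ve_\rvx,
\end{equation*}
where the second equality uses $\vtheta(t) = \mathrm{softmax}(\rvz(t))$ and the definition $\hat{\ve}_s = \hat{\ve}\circ\mathrm{softmax}$. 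Attaching the weight $w(t)=K\beta_1 t>0$ in front of the squared DSM integrand therefore reproduces the integrand $K\beta_1 t\,\|\ve_\rvx-\hat{\ve}(\vtheta(t),t)\|^2$ of Eq.~\eqref{eq:bg-dbfn-train} term-by-term. Since both losses sample the joint law of $(\rvx,t,\rvz(t))$ (equivalently $(\rvx,t,\vtheta(t))$) in the same way and differ only by a strictly positive $t$-weighting, they share the same optimal $\hat{\ve}$, which gives the stated equivalence.

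\emph{Main obstacle.} There is no nontrivial estimation step; the challenge is conceptual rather than computational. It consists in spotting the correct affine reparameterization in Eq.~\eqref{eq:dbfn-dsm}: the $-\rvz/(K\beta(t))$ term absorbs the Gaussian-mean shift, while the $-1/K$ constant absorbs the $\boldsymbol{1}/K$ piece that arises from $\vw_\rvx = K\ve_\rvx - \boldsymbol{1}$. This is what allows BFN's categorical output head $\hat{\ve}$, originally designed to predict the one-hot target $\ve_\rvx$ from $\vtheta$, to be reinterpreted as a valid score estimator for the latent SDE on $\rvz$, even though the BFN formulation in Eq.~\eqref{eq:bg-dbfn-train} never mentions $\rvz$ or a score.
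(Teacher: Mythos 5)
Your proposal is correct and follows essentially the same route as the paper's proof: rewrite the expectation over $\vtheta(t)$ as one over the Gaussian latent $\rvz(t)$, compute the conditional score $\nabla_{\rvz}\log q(\rvz(t)\mid\rvx) = -\rvz(t)/(K\beta(t)) + \ve_\rvx - 1/K$, observe that under the parameterization in Eq.~\eqref{eq:dbfn-dsm} the residual reduces to $\hat{\ve}_s(\rvz(t),t)-\ve_\rvx$, and conclude via the strictly positive weight $K\beta_1 t$ and the one-to-one correspondence between $\hat{\vs}$ and $\hat{\ve}_s$ that the two objectives have the same minimizer. Your handling of the positive $t$-reweighting is the same observation the paper makes implicitly when invoking \citet{vincent2011connection}, so there is nothing substantive to add.
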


Theorem~\ref{thm:dbfn-sde} and Theorem~\ref{thm:discrete-dsm} distinct BFNs from existing discrete DMs. Specifically, BFNs applied to discrete data solve a linear SDE and are trained using DSM, which aligns seamlessly with continuous DMs. Consequently, without changing the discrete data, BFNs are significantly simpler and more scalable and efficient than the related work, leveraging advancements in continuous DMs. We provide a comprehensive review and discussion in Sec.~\ref{sec:related}.

\subsection{Sampling as Discretizing the Reverse-time SDE}
\label{sec:dbfn-sample}

Plugging Eq.~\eqref{eq:dbfn-dsm} into Eq.~\eqref{eq:dbfn-reverse-sde}, we get a parameterized reverse-time SDE in $[0, 1-\eta]$ for sampling as follows
\begin{dmath}
    \dif\rvz =  - L(t)^2\left[\hat{\ve}_s(\rvz(t), t) - \frac{1}{K}\right]  \dif t + L(t) \dif\bar{\vw}. \label{eq:dbfn-para-reverse-sde}
\end{dmath}
The following Proposition~\ref{prop:convergence} suggests that the sampler proposed in the original BFN paper approximately discretizes the parameterized reverse-time SDE. 
\begin{proposition}[Proof in Appendix~\ref{app:dis_bfn_sampler}]\label{thm:dbfn-sample}
If the categorical sampling step in the BFN sampler on discrete data (i.e., Eq.~\eqref{eq:bg-dbfn-cat-sample}) is omitted, then it
is a first-order discretization of the parameterized reverse-time SDE in Eq.~\eqref{eq:dbfn-para-reverse-sde}.
\end{proposition}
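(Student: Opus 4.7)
The plan is to show that once the categorical sampling step in Eq.~\eqref{eq:bg-dbfn-cat-sample} is omitted---that is, the one-hot draw $\ve_k$ is replaced by its categorical mean $\hat{\ve}_s(\rvz_{i-1}, t_{i-1})$---the resulting update coincides, up to first order in the step size, with the Euler-Maruyama discretization of Eq.~\eqref{eq:dbfn-para-reverse-sde}. This reduces the proposition to a short matching-of-coefficients exercise.

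First I would substitute $\ve_k \to \hat{\ve}_s(\rvz_{i-1}, t_{i-1})$ in Eq.~\eqref{eq:bg-dbfn-main-sample} to obtain the deterministic-drift update
\[
\rvz_i = \rvz_{i-1} + \alpha_i\bigl(K\, \hat{\ve}_s(\rvz_{i-1}, t_{i-1}) - \vone\bigr) + \sqrt{K\alpha_i}\,\rvu_i,
\]
with $\alpha_i = \beta(t_i) - \beta(t_{i-1}) > 0$; positivity holds because the sampler moves backward in time ($t_i < t_{i-1}$) and $\beta(t) = (1-t)^2 \beta_1$ is decreasing on $[0, 1-\eta]$. Next, I apply the Euler-Maruyama scheme to Eq.~\eqref{eq:dbfn-para-reverse-sde} across $[t_{i-1}, t_i]$ with $\Delta t_i := t_i - t_{i-1} < 0$, using the fact that the reverse-time Brownian increment $\Delta\bar\vw$ has variance $|\Delta t_i|$, so that $L(t_{i-1})\,\Delta\bar\vw = L(t_{i-1}) \sqrt{-\Delta t_i}\,\rvu_i$. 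Plugging in $L(t)^2 = -K\beta^\prime(t)$ from Theorem~\ref{thm:dbfn-sde} and Taylor-expanding $\beta(t_i) - \beta(t_{i-1}) = \beta^\prime(t_{i-1})\Delta t_i + O(\Delta t_i^2)$, I get $-L(t_{i-1})^2 \Delta t_i = K\alpha_i + O(\Delta t_i^2)$ for the drift prefactor and $L(t_{i-1}) \sqrt{-\Delta t_i} = \sqrt{K\alpha_i} + O(|\Delta t_i|^{3/2})$ for the diffusion prefactor, so the Euler update matches the mean-substituted sampler to first order.

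The most delicate point, which I would verify carefully, is the sign bookkeeping: both $\Delta t_i$ and $\beta^\prime(t_{i-1})$ are negative, so their product is positive and correctly reproduces $K\alpha_i > 0$; similarly $-\Delta t_i > 0$ keeps the diffusion coefficient real. Once these signs are right, the smoothness of $\beta$ on $[0, 1-\eta]$ controls the Taylor remainders, and first-order consistency follows. As a concluding remark I would explain why the word ``approximate'' appears for the un-omitted sampler in Sec.~\ref{sec:dbfn-sample} but not in the proposition: the original sampler retains the intrinsic variance of the categorical draw $\ve_k$, whose mean already matches the correct drift but whose fluctuations are absent from Euler-Maruyama. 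Thus omitting the categorical step is precisely what promotes an approximate discretization into a genuine first-order one.
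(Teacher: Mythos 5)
Your proof is correct, and it reaches the same conclusion by a slightly different route than the paper. The paper writes the parameterized reverse-time SDE in integral form over $[s,t]$, evaluates the It\^o integral \emph{exactly} (a Gaussian with variance $K(\beta(t)-\beta(s))\mI$, since $L$ is time-dependent but state-independent) and integrates $-\int_s^t L(\tau)^2\,\dif\tau = K(\beta(t)-\beta(s))$ exactly as well, freezing only the network term $\hat\ve_s(\rvz(\tau),\tau)\approx\hat\ve_s(\rvz(s),s)+O(\tau-s)$; with $t=t_i$, $s=t_{i-1}$ this reproduces the no-CS update with drift and noise coefficients matching \emph{identically}, the sole local error being $O((t-s)^2)$. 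You instead compare against the classical Euler--Maruyama scheme with coefficients frozen at $t_{i-1}$ and then Taylor-expand $\beta$ to show the scheme's coefficients agree with the sampler's up to $O(\Delta t_i^2)$ in the drift and $O(|\Delta t_i|^{3/2})$ in the noise amplitude, with the sign bookkeeping ($\Delta t_i<0$, $\beta'<0$, hence $\alpha_i>0$) handled correctly. Your version is more elementary (no exact integration of the time-dependent coefficients), but it carries the extra, admittedly standard, burden of arguing that these higher-order coefficient mismatches---in particular the $O(|\Delta t_i|^{3/2})$ perturbation of the additive-noise amplitude---do not degrade first-order consistency; the paper's exact-coefficient derivation sidesteps this entirely because the sampler coincides exactly with the derived scheme. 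Your closing remark on the categorical draw supplying extra zero-mean fluctuations is consistent with the paper's discussion in Sec.~\ref{sec:dbfn-sample}, though note the paper deliberately leaves the theoretical role of that step open and treats it only empirically.
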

The role of the categorical sampling step is still unclear in theory. However, experiments in Fig.~\ref{fig:spelling_bfn_vs_solversde1} (Sec.~\ref{sec:exp_fast_discrete}) reveal that removing the categorical sampling step leads to consistently better performance in fewer than $50$ NFEs, and almost the same results otherwise. We provide preliminary analyses of it in Appendix~\ref{app:ablation}.


\subsection{Probability Flow ODE and Faster Sampling}
\label{sec:dbfn-fast}
 
Similar to the continuous case, the equivalent probability flow ODE of the parameterized reverse-time SDE on discrete data in 
Eq.~\eqref{eq:dbfn-para-reverse-sde} is 
\begin{dmath}
    \dif \rvz = \left\{-\frac{1}{1-t}\rvz(t) - \beta_1(1-t)[K\hat{\ve}_s(\rvz(t), t) -1]  \right\} \dif t. \label{eq:dbfn-para-ode}
\end{dmath}
For $0 \le t < s < 1-\eta$, its solution can be written as
\begin{dmath}
    \rvz(t) = \frac{1-t}{1-s}\rvz(s) + \beta_1(1-t)(t-s) - K\beta_1(1-t)\int_s^t \hat{\ve}_s(\rvz(\tau), \tau) \dif \tau. \label{eq:dis_ode_exact}
\end{dmath}
Again, we propose BFN-Solver on discrete data, and the first-order version is given by 
\begin{align}
     \rvz_i = & \beta_1(1-t_i)(t_i-t_{i-1})(1 - K\hat{\ve}_s(\rvz(t_{i-1}), t_{i-1})) \nonumber \\
     & +\frac{1-t_i}{1-t_{i-1}}\rvz_{i-1}.
\end{align}
Notably, we map the latent $\rvz_M$ to the distribution parameter $\vtheta_M = \text{softmax}(\rvz_M)$ at the last step to obtain the final samples. We refer the readers to Appendix~\ref{app:dis_bfn_solver} for higher-order solvers of both ODE and SDE. As presented in Sec.~\ref{sec:exp_fast_discrete}, the conclusion on the improvement of BFN-Solvers over the original BFN sampler remains the same on discrete data.

\section{Experiments}
\label{sec:exp}

We present the experimental setups in Sec.~\ref{sec:exp_setting}. We validate the proposed BFN-Solvers on continuous and discrete data, in Sec.~\ref{sec:exp_fast_continous} and Sec.~\ref{sec:exp_fast_discrete} respectively.

\subsection{Experimental Settings}
\label{sec:exp_setting}
\textbf{Model. }We employed the pre-trained models provided by the BFN~\citep{graves2023bayesian} in all experiments for fairness.

\textbf{Datasets. }For continuous data, the model is trained on the CIFAR-10~\citep{krizhevsky2009learning} dataset which contain 50K training images. For discrete data, the model is trained on the text8~\citep{mahoney2011large} dataset which contains 90M consecutive characters, each character is a lower Latin letter ‘a’-‘z’ or the ‘$\_$’ whitespace token, giving a class number of 27. Each sample is a sequence of 256 characters.

\textbf{Metrics. }For continuous data, we adopt the widely used FID~\citep{heusel2017gans} as the sample quality metric. We compute the FID metric on 10K generated samples for efficiency. For discrete data, there is no widely adopted sample-based metric comparable to FID in image modeling. Given our reliance on a simple character-level text dataset, we found that spelling accuracy (SA) is a straightforward yet effective metric for measuring the quality of text generation. Specifically, SA is defined as the ratio of correctly spelled words to the total words in the entire generated sequence, which is segmented by spaces. In each experiment, we collect 1,000 generated samples to calculate the metric. Additionally. we conducted a user study for text generation quality evaluation. For the user study, there are $100$ questions for each one vs.~one comparison (e.g., BFN vs. BFN-Solver1). In each question, participants were presented with two sentences randomly generated from two methods. Participants were instructed to choose a sentence of higher quality, which is known as the \emph{two-alternative forced choice} methodology~\cite{kawar2023imagic, bar2022text2live, park2020swapping}. Please see Appendix~\ref{app:experimental_details} for more experimental details.

\textbf{Truncation. } $\eta$ is a manually tuned hyperparameter specified in each experiment. For both $p_{1-\eta}(\vmu)$ and $p_{1-\eta}(\rvz)$, we found an isotropic Gaussian with zero mean and a calculated variance works well. We provide preliminary analyses of the variance in Appendix~\ref{sec:optimal-init}. 

\subsection{Fast Sampling on Continuous Data}
\label{sec:exp_fast_continous}
We compare our proposed fast sampling methods with the original BFN continuous sampler in this section. 

As illustrated in Fig.~\ref{fig:fid_nfe}, with the NFE less than $100$, BFN-Solver++1, BFN-Solver++2, and SDE-BFN-Solver++2 significantly outperform the BFN baseline. Moreover, BFN-Solver++2 achieves better results compared to BFN-Solver++1. When the NFE is higher (e.g., more than $500$), our observations reveal that SDE-based samplers exhibit slightly better performance over ODE-based samplers, which aligns with the diffusion model~\citep{song2020denoising, karras2022elucidating, nie2023blessing}. Please see Appendix~\ref{app:results_continuous} and Appendix~\ref{app:random_samples} for more quantitative results and randomly generated images, respectively.

We slightly tune the hyperparameter $\eta$ for our methods on different NFEs to get the best results, as detailed in Appendix~\ref{app:analysis_eta_continous}.

\begin{figure}[t!]
    \centering
    \includegraphics[width=\linewidth]{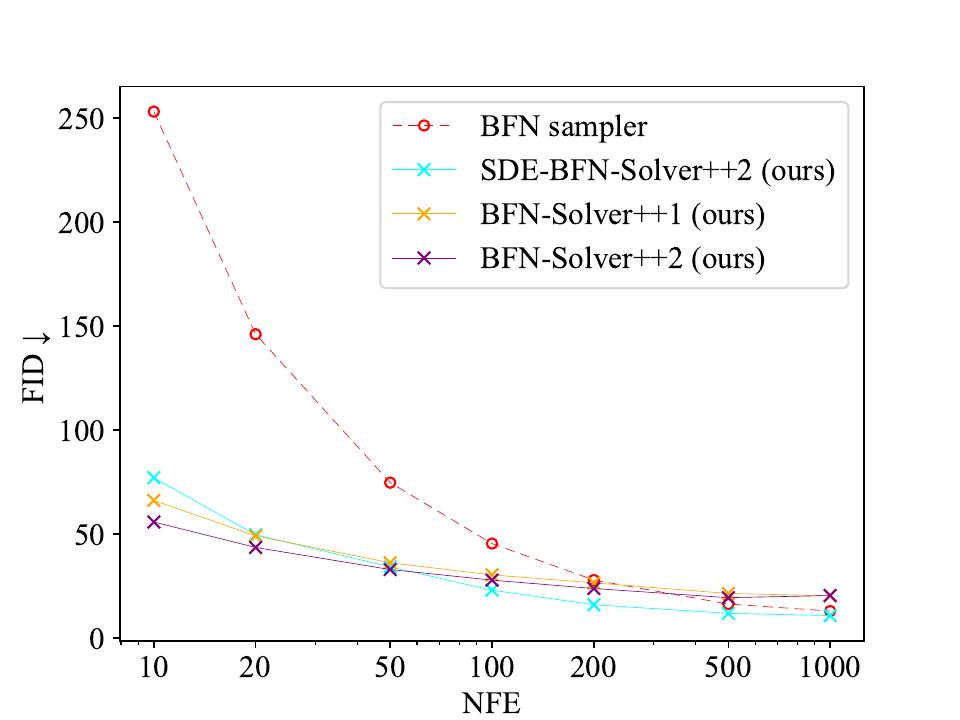}
    \vspace{-.15cm}
    \caption{\textbf{Fast sampling results on the  continuous CIFAR-10 dataset.} Sampling quality is measured by FID $\downarrow$, varying the NFE.}
    \label{fig:fid_nfe}
    \vspace{-.15cm}
\end{figure}

\subsection{Fast Sampling on Discrete Data}
\label{sec:exp_fast_discrete}
We compare our proposed fast sampling methods with the origin BFN discrete sampler and other representative baselines in this section. 

As illustrated in Fig.~\ref{fig:spelling_nfe}, with the NFE less than $30$, BFN-Solver1, BFN-Solver2, and SDE-BFN-Solver2 significantly outperform the BFN baseline. Moreover, BFN-Solver2 and SDE-BFN-Solver2 achieve better results compared to BFN-Solver1, agreeing with the continuous case. We provide a preliminary user study in Fig.~\ref{fig:user_study} with 10 NFEs and the results align with Fig.~\ref{fig:spelling_nfe}.
When the NFE is higher (e.g., more than $500$), we observe that SDE-based samplers exhibit slightly better performance than ODE-based samplers, which aligns with existing results in DMs. Please see Appendix~\ref{app:results_discrete} and Appendix~\ref{app:random_samples} for more quantitative results and randomly generated texts. 

We compare the efficiency of BFN-Solver2 with representative baselines including D3PM~\cite{austin2023structured} and OAARDM~\cite{HoogeboomGBPBS22}. As summarized in Table~\ref{tab:comp}, BFN-Solver2 outperforms all baselines with both $20$ and $50$ steps indirectly. Notably, evaluating the exact likelihood by the proposed samplers requires a nontrivial extension~\citep{song2021scorebased}. We use the original BFN sampler, evaluated under both metrics, as a bridge for an indirect comparison. For instance, according to Table~\ref{tab:comp}, BFN-Solver2 ($20$ steps, SA $0.84$) is better than the original BFN sampler ($25$ steps, SA $0.80$, BPC $1.52$) and better than OA-ARDM ($20$ steps, BPC $1.52$).
\begin{table}[t!]
\caption{\textbf{Efficiency evaluation results of different methods on discrete text8 dataset.} model performance is measured by bits-per-character~(BPC) $\downarrow$ and SA$\uparrow$, varying the number of evaluations. To get the strongest baseline, we use the D3PM absorbing model, which has a better BPC than D3PM uniform and NN models in the original paper~\cite{austin2023structured}.}
\label{tab:comp}
\vskip 0.15in
\begin{center}
\begin{small}
\begin{sc}
\begin{tabular}{lccr}
\toprule
Methods  & NFE   & BPC $\downarrow$   & SA $\uparrow$  \\
\midrule
BFN-Solver2 (Ours)    & 50 & -    & \textbf{0.85} \\
BFN                  & 50 & 1.47 & 0.84          \\
D3PM (Best)          & 50 & 1.52 & -             \\
BFN                  & 25 & 1.52 & 0.80          \\
BFN-Solver2 (Ours)    & 20 & -    & \textbf{0.84} \\
BFN                  & 20 & 1.56 & 0.79          \\
BFN-Solver2 (Ours)    & 20 & 1.56 & -             \\
OA-ARDM              & 20 & 1.52 & -             \\
\bottomrule
\end{tabular}
\end{sc}
\end{small}
\end{center}
\vskip -0.1in
\end{table}

We find that the hyperparameter $\eta=0.001$ is sufficient for all NFEs for BFN-Solvers to get excellent results. We refer the readers to Appendix~\ref{app:analysis_eta_discrete} for more details.

Finally, we perform an ablation of the original BFN solver in Fig.~\ref{fig:spelling_bfn_vs_solversde1} and find that an exact solver that just removes the categorical sampling step from the BFN sampler works better, conforming to our theory.

\begin{figure}[t!]
    \centering
    \includegraphics[width=\linewidth]{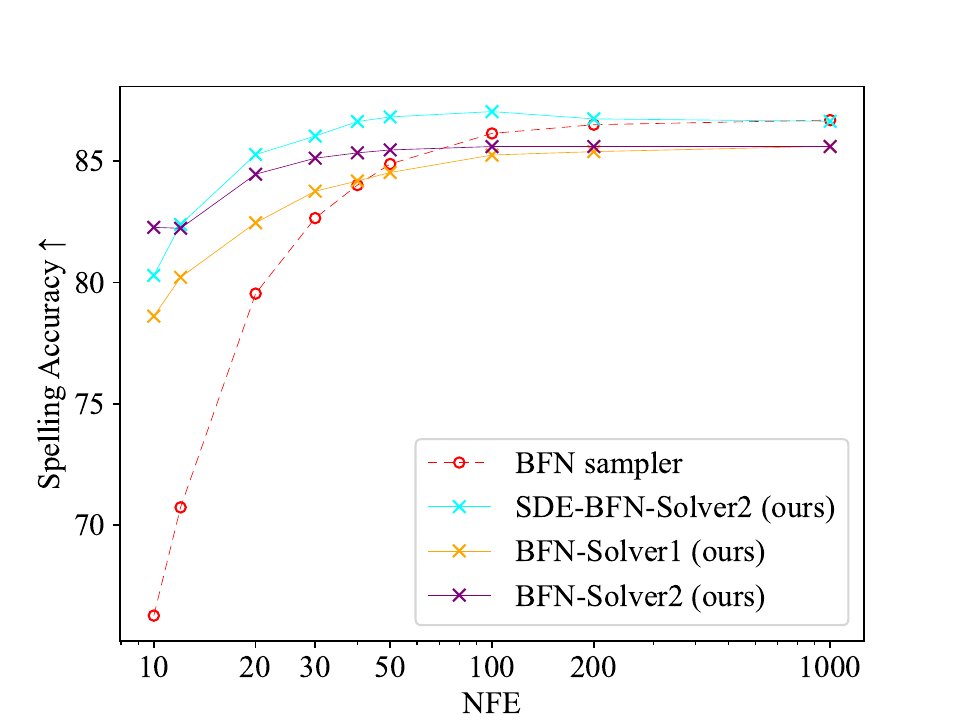}
    \vspace{-.15cm}
    \caption{\textbf{Fast sampling results on the discrete text8 dataset.} Sampling quality is measured by SA $\uparrow$, varying the NFE.}
    \label{fig:spelling_nfe}
    \vspace{-.15cm}
\end{figure}

\begin{figure}[t!]
    \centering
    \includegraphics[width=\linewidth]{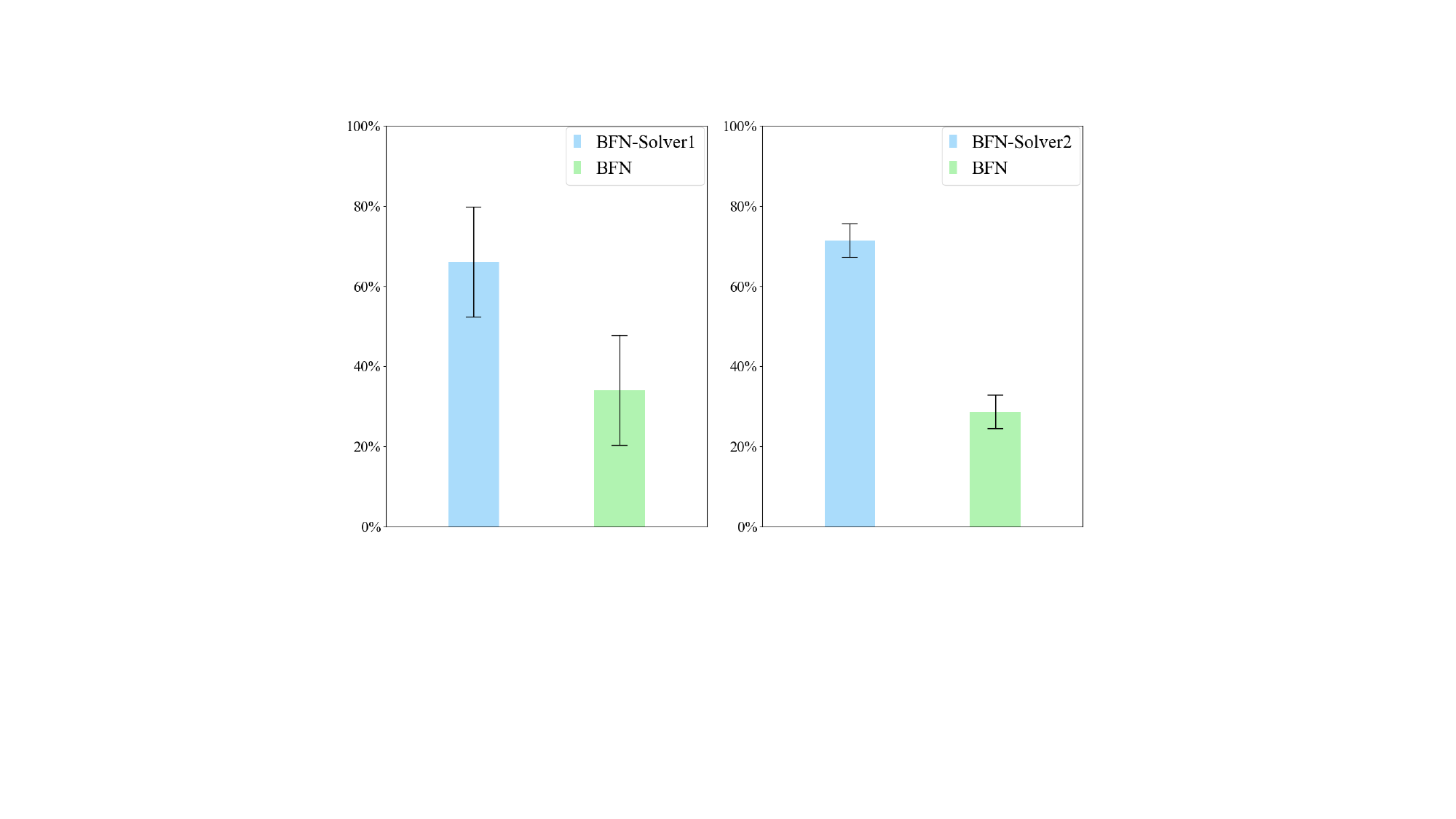}
    \vspace{-.15cm}
    \caption{\textbf{User study results on the discrete text8 dataset with 10 NFE.} We present the preference rates (with 95\% confidence intervals) of BFN-Solver1 and BFN-Solver2 over BFN baseline.}
    \label{fig:user_study}
    \vspace{-.15cm}
\end{figure}

\begin{figure}[t!]
    \centering
    \includegraphics[width=\linewidth]{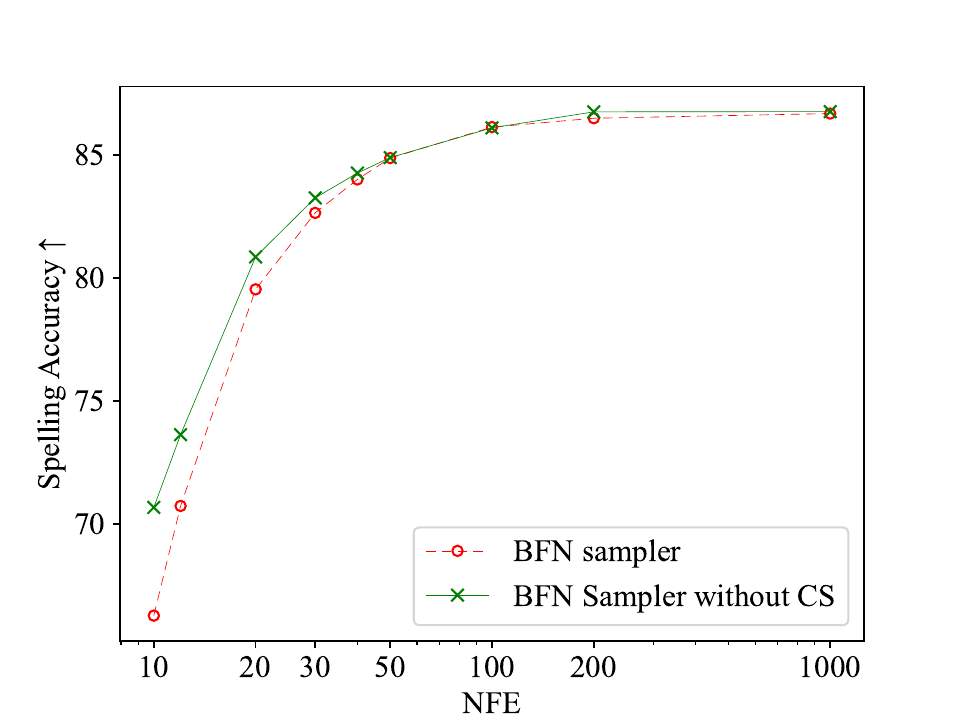}
    \vspace{-.15cm}
    \caption{\textbf{Ablation of the categorical sampling (CS) step in the BFN sampler on the text8 dataset.} Sampling quality is measured by SA $\uparrow$, varying the NFE.}
    \vspace{-.15cm}
    \label{fig:spelling_bfn_vs_solversde1}
\end{figure}

\section{Conclusion}
\label{sec:conclusion}

We unify BFNs and DMs by identifying the linear SDEs pertinent to the noise-addition processes in BFNs, illustrating that BFN's regression losses correspond with denoise score matching, and validating the sampler in BFN as an effective first-order solver for the related reverse-time SDE. Motivated by these insights, we implement fast sampling techniques from DMs in BFNs, yielding promising results.

Building upon the established results of DMs,
this paper establishes a principled and systematic approach to the analysis and enhancement of BFNs and future work includes the development of predictor-corrector samplers~\cite{song2020denoising,zhao2023unipc}, improved methods for likelihood evaluation~\cite{bao2022analytic,bao2022estimating}, and novel training strategies to refine~\cite{karras2022elucidating} and scale BFNs~\cite{rombach2022high}.

Limitations of the paper include the scale of the datasets and evaluation metrics. In our experiment, for a fair comparison, we leverage the pre-trained models of BFNs, which are all trained on small datasets. Further, the samplers cannot be directly used in likelihood evaluation and we mainly employ the FID and spelling accuracy as surrogates for the sample quality, potentially introducing bias. Hopefully, these limitations can be solved by scaling up BFNs to common benchmarks, as mentioned in future work.

\section*{Impact Statement}

This paper presents work whose goal is to advance the field of machine learning. There are many potential societal consequences of our work, none of which we feel must be specifically highlighted here.

\section*{Acknowledgements}

This work was supported by NSF of China (No. 62076145), Major Innovation \& Planning Interdisciplinary Platform for the ``Double-First Class" Initiative, Renmin University of China, the Fundamental Research Funds for the Central Universities, the Research Funds of Renmin University of China (No. 22XNKJ13), and the Ant Group Research Fund. C. Li was also sponsored by Beijing Nova Program (No. 20220484044). The work was partially done at the Engineering Research Center of Next-Generation IntelligentSearch and Recommendation, Ministry of Education.

\bibliography{icml2024}

\begin{thebibliography}{63}
\providecommand{\natexlab}[1]{#1}
\providecommand{\url}[1]{\texttt{#1}}
\expandafter\ifx\csname urlstyle\endcsname\relax
  \providecommand{\doi}[1]{doi: #1}\else
  \providecommand{\doi}{doi: \begingroup \urlstyle{rm}\Url}\fi

\bibitem[Anderson(1982)]{anderson1982reverse}
Anderson, B.~D.
\newblock Reverse-time diffusion equation models.
\newblock \emph{Stochastic Processes and their Applications}, 12\penalty0 (3):\penalty0 313--326, 1982.

\bibitem[Atkinson et~al.(2009)Atkinson, Han, and Stewart]{atkinson2009numerical}
Atkinson, K., Han, W., and Stewart, D.~E.
\newblock \emph{Numerical solution of ordinary differential equations}, volume~81.
\newblock John Wiley \& Sons, 2009.

\bibitem[Austin et~al.(2023)Austin, Johnson, Ho, Tarlow, and van~den Berg]{austin2023structured}
Austin, J., Johnson, D.~D., Ho, J., Tarlow, D., and van~den Berg, R.
\newblock Structured denoising diffusion models in discrete state-spaces, 2023.

\bibitem[Balaji et~al.(2023)Balaji, Nah, Huang, Vahdat, Song, Zhang, Kreis, Aittala, Aila, Laine, Catanzaro, Karras, and Liu]{balaji2023ediffi}
Balaji, Y., Nah, S., Huang, X., Vahdat, A., Song, J., Zhang, Q., Kreis, K., Aittala, M., Aila, T., Laine, S., Catanzaro, B., Karras, T., and Liu, M.-Y.
\newblock ediff-i: Text-to-image diffusion models with an ensemble of expert denoisers, 2023.

\bibitem[Bao et~al.(2022{\natexlab{a}})Bao, Li, Sun, Zhu, and Zhang]{bao2022estimating}
Bao, F., Li, C., Sun, J., Zhu, J., and Zhang, B.
\newblock Estimating the optimal covariance with imperfect mean in diffusion probabilistic models.
\newblock In \emph{International Conference on Machine Learning}, pp.\  1555--1584. PMLR, 2022{\natexlab{a}}.

\bibitem[Bao et~al.(2022{\natexlab{b}})Bao, Li, Zhu, and Zhang]{bao2022analytic}
Bao, F., Li, C., Zhu, J., and Zhang, B.
\newblock Analytic-{DPM}: an analytic estimate of the optimal reverse variance in diffusion probabilistic models.
\newblock In \emph{International Conference on Learning Representations}, 2022{\natexlab{b}}.

\bibitem[Bao et~al.(2023)Bao, Nie, Xue, Li, Pu, Wang, Yue, Cao, Su, and Zhu]{bao2023one}
Bao, F., Nie, S., Xue, K., Li, C., Pu, S., Wang, Y., Yue, G., Cao, Y., Su, H., and Zhu, J.
\newblock One transformer fits all distributions in multi-modal diffusion at scale.
\newblock In \emph{International Conference on Machine Learning}, pp.\  1692--1717. PMLR, 2023.

\bibitem[Bar-Tal et~al.(2022)Bar-Tal, Ofri-Amar, Fridman, Kasten, and Dekel]{bar2022text2live}
Bar-Tal, O., Ofri-Amar, D., Fridman, R., Kasten, Y., and Dekel, T.
\newblock Text2live: Text-driven layered image and video editing.
\newblock In \emph{European conference on computer vision}, pp.\  707--723. Springer, 2022.

\bibitem[Brown et~al.(2020)Brown, Mann, Ryder, Subbiah, Kaplan, Dhariwal, Neelakantan, Shyam, Sastry, Askell, et~al.]{brown2020language}
Brown, T., Mann, B., Ryder, N., Subbiah, M., Kaplan, J.~D., Dhariwal, P., Neelakantan, A., Shyam, P., Sastry, G., Askell, A., et~al.
\newblock Language models are few-shot learners.
\newblock \emph{Advances in neural information processing systems}, 33:\penalty0 1877--1901, 2020.

\bibitem[Campbell et~al.(2022)Campbell, Benton, Bortoli, Rainforth, Deligiannidis, and Doucet]{campbell2022continuous}
Campbell, A., Benton, J., Bortoli, V.~D., Rainforth, T., Deligiannidis, G., and Doucet, A.
\newblock A continuous time framework for discrete denoising models, 2022.

\bibitem[Chen et~al.(2020)Chen, Zhang, Zen, Weiss, Norouzi, and Chan]{chen2020wavegrad}
Chen, N., Zhang, Y., Zen, H., Weiss, R.~J., Norouzi, M., and Chan, W.
\newblock Wavegrad: Estimating gradients for waveform generation.
\newblock \emph{arXiv preprint arXiv:2009.00713}, 2020.

\bibitem[Chen et~al.(2022)Chen, Zhang, and Hinton]{chen2023analog}
Chen, T., Zhang, R., and Hinton, G.
\newblock Analog bits: Generating discrete data using diffusion models with self-conditioning.
\newblock In \emph{The Eleventh International Conference on Learning Representations}, 2022.

\bibitem[Dhariwal \& Nichol(2021)Dhariwal and Nichol]{dhariwal2021diffusion}
Dhariwal, P. and Nichol, A.
\newblock Diffusion models beat gans on image synthesis.
\newblock \emph{Advances in neural information processing systems}, 34:\penalty0 8780--8794, 2021.

\bibitem[Dieleman et~al.(2022)Dieleman, Sartran, Roshannai, Savinov, Ganin, Richemond, Doucet, Strudel, Dyer, Durkan, Hawthorne, Leblond, Grathwohl, and Adler]{dieleman2022continuous}
Dieleman, S., Sartran, L., Roshannai, A., Savinov, N., Ganin, Y., Richemond, P.~H., Doucet, A., Strudel, R., Dyer, C., Durkan, C., Hawthorne, C., Leblond, R., Grathwohl, W., and Adler, J.
\newblock Continuous diffusion for categorical data, 2022.

\bibitem[Graves et~al.(2023)Graves, Srivastava, Atkinson, and Gomez]{graves2023bayesian}
Graves, A., Srivastava, R.~K., Atkinson, T., and Gomez, F.
\newblock Bayesian flow networks, 2023.

\bibitem[Guo et~al.(2023)Guo, Lu, Bao, Pang, Yan, Du, and Li]{guo2023gaussian}
Guo, H., Lu, C., Bao, F., Pang, T., Yan, S., Du, C., and Li, C.
\newblock Gaussian mixture solvers for diffusion models.
\newblock \emph{arXiv preprint arXiv:2311.00941}, 2023.

\bibitem[Heusel et~al.(2017)Heusel, Ramsauer, Unterthiner, Nessler, and Hochreiter]{heusel2017gans}
Heusel, M., Ramsauer, H., Unterthiner, T., Nessler, B., and Hochreiter, S.
\newblock Gans trained by a two time-scale update rule converge to a local nash equilibrium.
\newblock \emph{Advances in neural information processing systems}, 30, 2017.

\bibitem[Ho et~al.(2020)Ho, Jain, and Abbeel]{ho2020denoising}
Ho, J., Jain, A., and Abbeel, P.
\newblock Denoising diffusion probabilistic models.
\newblock \emph{Advances in neural information processing systems}, 33:\penalty0 6840--6851, 2020.

\bibitem[Ho et~al.(2022)Ho, Chan, Saharia, Whang, Gao, Gritsenko, Kingma, Poole, Norouzi, Fleet, et~al.]{ho2022imagen}
Ho, J., Chan, W., Saharia, C., Whang, J., Gao, R., Gritsenko, A., Kingma, D.~P., Poole, B., Norouzi, M., Fleet, D.~J., et~al.
\newblock Imagen video: High definition video generation with diffusion models.
\newblock \emph{arXiv preprint arXiv:2210.02303}, 2022.

\bibitem[Hoogeboom et~al.(2021)Hoogeboom, Nielsen, Jaini, Forré, and Welling]{hoogeboom2021argmax}
Hoogeboom, E., Nielsen, D., Jaini, P., Forré, P., and Welling, M.
\newblock Argmax flows and multinomial diffusion: Learning categorical distributions, 2021.

\bibitem[Hoogeboom et~al.(2022)Hoogeboom, Gritsenko, Bastings, Poole, van~den Berg, and Salimans]{HoogeboomGBPBS22}
Hoogeboom, E., Gritsenko, A.~A., Bastings, J., Poole, B., van~den Berg, R., and Salimans, T.
\newblock Autoregressive diffusion models.
\newblock In \emph{The Tenth International Conference on Learning Representations, {ICLR} 2022, Virtual Event, April 25-29, 2022}. OpenReview.net, 2022.
\newblock URL \url{https://openreview.net/forum?id=Lm8T39vLDTE}.

\bibitem[Hyv{\"a}rinen(2005)]{hyvarinen2005estimation}
Hyv{\"a}rinen, A.
\newblock Estimation of non-normalized statistical models by score matching.
\newblock \emph{Journal of Machine Learning Research (JMLR)}, 6\penalty0 (Apr):\penalty0 695--709, 2005.

\bibitem[Jolicoeur-Martineau et~al.(2021)Jolicoeur-Martineau, Li, Pich{\'e}-Taillefer, Kachman, and Mitliagkas]{jolicoeur2021gotta}
Jolicoeur-Martineau, A., Li, K., Pich{\'e}-Taillefer, R., Kachman, T., and Mitliagkas, I.
\newblock Gotta go fast when generating data with score-based models.
\newblock \emph{arXiv preprint arXiv:2105.14080}, 2021.

\bibitem[Karras et~al.(2022)Karras, Aittala, Aila, and Laine]{karras2022elucidating}
Karras, T., Aittala, M., Aila, T., and Laine, S.
\newblock Elucidating the design space of diffusion-based generative models.
\newblock \emph{Advances in Neural Information Processing Systems}, 35:\penalty0 26565--26577, 2022.

\bibitem[Kawar et~al.(2023)Kawar, Zada, Lang, Tov, Chang, Dekel, Mosseri, and Irani]{kawar2023imagic}
Kawar, B., Zada, S., Lang, O., Tov, O., Chang, H., Dekel, T., Mosseri, I., and Irani, M.
\newblock Imagic: Text-based real image editing with diffusion models.
\newblock In \emph{Proceedings of the IEEE/CVF Conference on Computer Vision and Pattern Recognition}, pp.\  6007--6017, 2023.

\bibitem[Kingma et~al.(2021)Kingma, Salimans, Poole, and Ho]{kingma2021variational}
Kingma, D., Salimans, T., Poole, B., and Ho, J.
\newblock Variational diffusion models.
\newblock \emph{Advances in neural information processing systems}, 34:\penalty0 21696--21707, 2021.

\bibitem[Kloeden et~al.(1992)Kloeden, Platen, Kloeden, and Platen]{kloeden1992stochastic}
Kloeden, P.~E., Platen, E., Kloeden, P.~E., and Platen, E.
\newblock \emph{Stochastic differential equations}.
\newblock Springer, 1992.

\bibitem[Kong et~al.(2020)Kong, Ping, Huang, Zhao, and Catanzaro]{kong2020diffwave}
Kong, Z., Ping, W., Huang, J., Zhao, K., and Catanzaro, B.
\newblock Diffwave: A versatile diffusion model for audio synthesis.
\newblock \emph{arXiv preprint arXiv:2009.09761}, 2020.

\bibitem[Krizhevsky \& Hinton(2009)Krizhevsky and Hinton]{krizhevsky2009learning}
Krizhevsky, A. and Hinton, G.
\newblock Learning multiple layers of features from tiny images.
\newblock Technical Report~0, University of Toronto, Toronto, Ontario, 2009.
\newblock URL \url{https://www.cs.toronto.edu/~kriz/learning-features-2009-TR.pdf}.

\bibitem[Li et~al.(2022)Li, Thickstun, Gulrajani, Liang, and Hashimoto]{li2022diffusionlm}
Li, X.~L., Thickstun, J., Gulrajani, I., Liang, P., and Hashimoto, T.~B.
\newblock Diffusion-lm improves controllable text generation, 2022.

\bibitem[Liu et~al.(2022)Liu, Ren, Lin, and Zhao]{liu2022pseudo}
Liu, L., Ren, Y., Lin, Z., and Zhao, Z.
\newblock Pseudo numerical methods for diffusion models on manifolds.
\newblock \emph{arXiv preprint arXiv:2202.09778}, 2022.

\bibitem[Lou \& Ermon(2023)Lou and Ermon]{lou2023reflected}
Lou, A. and Ermon, S.
\newblock Reflected diffusion models.
\newblock \emph{arXiv preprint arXiv:2304.04740}, 2023.

\bibitem[Lou et~al.(2023)Lou, Meng, and Ermon]{lou2023discrete}
Lou, A., Meng, C., and Ermon, S.
\newblock Discrete diffusion language modeling by estimating the ratios of the data distribution, 2023.

\bibitem[Lu et~al.(2022{\natexlab{a}})Lu, Zheng, Bao, Chen, Li, and Zhu]{DBLP:conf/icml/LuZB0LZ22}
Lu, C., Zheng, K., Bao, F., Chen, J., Li, C., and Zhu, J.
\newblock Maximum likelihood training for score-based diffusion odes by high order denoising score matching.
\newblock In \emph{International Conference on Machine Learning, {ICML} 2022, 17-23 July 2022, Baltimore, Maryland, {USA}}, volume 162 of \emph{Proceedings of Machine Learning Research}, pp.\  14429--14460. {PMLR}, 2022{\natexlab{a}}.
\newblock URL \url{https://proceedings.mlr.press/v162/lu22f.html}.

\bibitem[Lu et~al.(2022{\natexlab{b}})Lu, Zhou, Bao, Chen, Li, and Zhu]{lu2022dpm}
Lu, C., Zhou, Y., Bao, F., Chen, J., Li, C., and Zhu, J.
\newblock Dpm-solver: A fast ode solver for diffusion probabilistic model sampling in around 10 steps.
\newblock \emph{Advances in Neural Information Processing Systems}, 35:\penalty0 5775--5787, 2022{\natexlab{b}}.

\bibitem[Lu et~al.(2022{\natexlab{c}})Lu, Zhou, Bao, Chen, Li, and Zhu]{lu2022dpm++}
Lu, C., Zhou, Y., Bao, F., Chen, J., Li, C., and Zhu, J.
\newblock Dpm-solver++: Fast solver for guided sampling of diffusion probabilistic models.
\newblock \emph{arXiv preprint arXiv:2211.01095}, 2022{\natexlab{c}}.

\bibitem[Mahabadi et~al.(2023)Mahabadi, Tae, Ivison, Henderson, Beltagy, Peters, and Cohan]{mahabadi2023tess}
Mahabadi, R.~K., Tae, J., Ivison, H., Henderson, J., Beltagy, I., Peters, M.~E., and Cohan, A.
\newblock Tess: Text-to-text self-conditioned simplex diffusion.
\newblock \emph{arXiv preprint arXiv:2305.08379}, 2023.

\bibitem[Mahoney(2011)]{mahoney2011large}
Mahoney, M.
\newblock Large text compression benchmark, 2011.

\bibitem[Meng et~al.(2023)Meng, Choi, Song, and Ermon]{meng2023concrete}
Meng, C., Choi, K., Song, J., and Ermon, S.
\newblock Concrete score matching: Generalized score matching for discrete data, 2023.

\bibitem[Nie et~al.(2023)Nie, Guo, Lu, Zhou, Zheng, and Li]{nie2023blessing}
Nie, S., Guo, H.~A., Lu, C., Zhou, Y., Zheng, C., and Li, C.
\newblock The blessing of randomness: Sde beats ode in general diffusion-based image editing.
\newblock \emph{arXiv preprint arXiv:2311.01410}, 2023.

\bibitem[{\O}ksendal(2003)]{oksendal2003stochastic}
{\O}ksendal, B.
\newblock \emph{Stochastic differential equations}.
\newblock Springer, 2003.

\bibitem[OpenAI(2023)]{gpt4}
OpenAI.
\newblock Gpt-4 technical report.
\newblock \emph{arXiv preprint arXiv:2303.08774}, 2023.

\bibitem[Pang et~al.(2020)Pang, Xu, Li, Song, Ermon, and Zhu]{pang2020efficient}
Pang, T., Xu, K., Li, C., Song, Y., Ermon, S., and Zhu, J.
\newblock Efficient learning of generative models via finite-difference score matching.
\newblock \emph{Advances in Neural Information Processing Systems}, 33:\penalty0 19175--19188, 2020.

\bibitem[Park et~al.(2020)Park, Zhu, Wang, Lu, Shechtman, Efros, and Zhang]{park2020swapping}
Park, T., Zhu, J.-Y., Wang, O., Lu, J., Shechtman, E., Efros, A., and Zhang, R.
\newblock Swapping autoencoder for deep image manipulation.
\newblock \emph{Advances in Neural Information Processing Systems}, 33:\penalty0 7198--7211, 2020.

\bibitem[Podell et~al.(2023)Podell, English, Lacey, Blattmann, Dockhorn, M{\"u}ller, Penna, and Rombach]{podell2023sdxl}
Podell, D., English, Z., Lacey, K., Blattmann, A., Dockhorn, T., M{\"u}ller, J., Penna, J., and Rombach, R.
\newblock Sdxl: improving latent diffusion models for high-resolution image synthesis.
\newblock \emph{arXiv preprint arXiv:2307.01952}, 2023.

\bibitem[Poole et~al.(2022)Poole, Jain, Barron, and Mildenhall]{poole2022dreamfusion}
Poole, B., Jain, A., Barron, J.~T., and Mildenhall, B.
\newblock Dreamfusion: Text-to-3d using 2d diffusion.
\newblock \emph{arXiv preprint arXiv:2209.14988}, 2022.

\bibitem[Ramesh et~al.(2022)Ramesh, Dhariwal, Nichol, Chu, and Chen]{ramesh2022hierarchical}
Ramesh, A., Dhariwal, P., Nichol, A., Chu, C., and Chen, M.
\newblock Hierarchical text-conditional image generation with clip latents.
\newblock \emph{arXiv preprint arXiv:2204.06125}, 1\penalty0 (2):\penalty0 3, 2022.

\bibitem[Richemond et~al.(2022)Richemond, Dieleman, and Doucet]{richemond2022categorical}
Richemond, P.~H., Dieleman, S., and Doucet, A.
\newblock Categorical sdes with simplex diffusion.
\newblock \emph{arXiv preprint arXiv:2210.14784}, 2022.

\bibitem[Rombach et~al.(2022)Rombach, Blattmann, Lorenz, Esser, and Ommer]{rombach2022high}
Rombach, R., Blattmann, A., Lorenz, D., Esser, P., and Ommer, B.
\newblock High-resolution image synthesis with latent diffusion models.
\newblock In \emph{Proceedings of the IEEE/CVF conference on computer vision and pattern recognition}, pp.\  10684--10695, 2022.

\bibitem[Saharia et~al.(2022)Saharia, Chan, Saxena, Li, Whang, Denton, Ghasemipour, Gontijo~Lopes, Karagol~Ayan, Salimans, et~al.]{saharia2022photorealistic}
Saharia, C., Chan, W., Saxena, S., Li, L., Whang, J., Denton, E.~L., Ghasemipour, K., Gontijo~Lopes, R., Karagol~Ayan, B., Salimans, T., et~al.
\newblock Photorealistic text-to-image diffusion models with deep language understanding.
\newblock \emph{Advances in Neural Information Processing Systems}, 35:\penalty0 36479--36494, 2022.

\bibitem[Singer et~al.(2022)Singer, Polyak, Hayes, Yin, An, Zhang, Hu, Yang, Ashual, Gafni, et~al.]{singer2022make}
Singer, U., Polyak, A., Hayes, T., Yin, X., An, J., Zhang, S., Hu, Q., Yang, H., Ashual, O., Gafni, O., et~al.
\newblock Make-a-video: Text-to-video generation without text-video data.
\newblock \emph{arXiv preprint arXiv:2209.14792}, 2022.

\bibitem[Sohl-Dickstein et~al.(2015)Sohl-Dickstein, Weiss, Maheswaranathan, and Ganguli]{sohl2015deep}
Sohl-Dickstein, J., Weiss, E., Maheswaranathan, N., and Ganguli, S.
\newblock Deep unsupervised learning using nonequilibrium thermodynamics.
\newblock In \emph{International conference on machine learning}, pp.\  2256--2265. PMLR, 2015.

\bibitem[Song et~al.(2020)Song, Meng, and Ermon]{song2020denoising}
Song, J., Meng, C., and Ermon, S.
\newblock Denoising diffusion implicit models.
\newblock \emph{arXiv preprint arXiv:2010.02502}, 2020.

\bibitem[Song et~al.(2019)Song, Garg, Shi, and Ermon]{song2019sliced}
Song, Y., Garg, S., Shi, J., and Ermon, S.
\newblock Sliced score matching: A scalable approach to density and score estimation.
\newblock In \emph{Conference on Uncertainty in Artificial Intelligence (UAI)}, 2019.

\bibitem[Song et~al.(2021)Song, Sohl-Dickstein, Kingma, Kumar, Ermon, and Poole]{song2021scorebased}
Song, Y., Sohl-Dickstein, J., Kingma, D.~P., Kumar, A., Ermon, S., and Poole, B.
\newblock Score-based generative modeling through stochastic differential equations, 2021.

\bibitem[Sun et~al.(2023)Sun, Yu, Dai, Schuurmans, and Dai]{sun2023scorebased}
Sun, H., Yu, L., Dai, B., Schuurmans, D., and Dai, H.
\newblock Score-based continuous-time discrete diffusion models, 2023.

\bibitem[Vincent(2011)]{vincent2011connection}
Vincent, P.
\newblock A connection between score matching and denoising autoencoders.
\newblock \emph{Neural computation}, 23\penalty0 (7):\penalty0 1661--1674, 2011.

\bibitem[Wang et~al.(2023)Wang, Lu, Wang, Bao, Li, Su, and Zhu]{wang2023prolificdreamer}
Wang, Z., Lu, C., Wang, Y., Bao, F., Li, C., Su, H., and Zhu, J.
\newblock Prolificdreamer: High-fidelity and diverse text-to-3d generation with variational score distillation.
\newblock \emph{arXiv preprint arXiv:2305.16213}, 2023.

\bibitem[Xue et~al.(2023{\natexlab{a}})Xue, Yi, Luo, Zhang, Sun, Li, and Ma]{xue2023sa}
Xue, S., Yi, M., Luo, W., Zhang, S., Sun, J., Li, Z., and Ma, Z.-M.
\newblock Sa-solver: Stochastic adams solver for fast sampling of diffusion models.
\newblock \emph{arXiv preprint arXiv:2309.05019}, 2023{\natexlab{a}}.

\bibitem[Xue et~al.(2023{\natexlab{b}})Xue, Song, Guo, Liu, Zong, Liu, and Luo]{xue2023raphael}
Xue, Z., Song, G., Guo, Q., Liu, B., Zong, Z., Liu, Y., and Luo, P.
\newblock Raphael: Text-to-image generation via large mixture of diffusion paths.
\newblock \emph{arXiv preprint arXiv:2305.18295}, 2023{\natexlab{b}}.

\bibitem[Ye et~al.(2023)Ye, Zheng, Bao, Qian, and Wang]{ye2023dinoiser}
Ye, J., Zheng, Z., Bao, Y., Qian, L., and Wang, M.
\newblock Dinoiser: Diffused conditional sequence learning by manipulating noises, 2023.

\bibitem[Zhang et~al.(2022)Zhang, Tao, and Chen]{zhang2022gddim}
Zhang, Q., Tao, M., and Chen, Y.
\newblock gddim: Generalized denoising diffusion implicit models.
\newblock \emph{arXiv preprint arXiv:2206.05564}, 2022.

\bibitem[Zhao et~al.(2023)Zhao, Bai, Rao, Zhou, and Lu]{zhao2023unipc}
Zhao, W., Bai, L., Rao, Y., Zhou, J., and Lu, J.
\newblock Unipc: A unified predictor-corrector framework for fast sampling of diffusion models.
\newblock \emph{arXiv preprint arXiv:2302.04867}, 2023.

\end{thebibliography}
\bibliographystyle{icml2024}

\newpage
\appendix

\onecolumn

\section{Derivation for Continuous-time BFN on Continuous Data}
\subsection{Proof of Theorem~\ref{thm:cbfn-sde}}
\label{proof:sde}
\begin{proof}
    In this proof, we will show that the marginal distribution $\vmu(t)$ \emph{conditioned on $\rvx$} of Eq.~\eqref{eq:proof_sde} at time $t$ is the BFN in Eq.~\eqref{eq:bg-cbfn} 
    by the ``variation of constants'' formula~\citep{atkinson2009numerical} and the It\^o's formula~\citep[Theorem 4.2.1]{oksendal2003stochastic}.

    To find the marginal distribution of $\vmu(t)$, we introduce a new process $\tilde \vmu(t) := e^{-\int_0^t F(\tau)\dif \tau} \vmu(t)$.
    Using It\^o's formula, we see that $\tilde \vmu$ follows the equation below.
    \begin{align*}
        \dif \tilde \vmu 
        &= e^{-\int_0^t F(\tau)\dif \tau} \dif \vmu(t) + \left ( \frac{\dif}{\dif t} e^{-\int_0^t F(\tau)\dif \tau} \right ) \vmu(t) \dif t \\
        &= e^{-\int_0^t F(\tau)\dif \tau} (F(t) \vmu(t) \dif t + G(t) \dif \vw) - F(t) e^{-\int_0^t F(\tau)\dif \tau} \vmu(t) \dif t \\
        &= e^{-\int_0^t F(\tau)\dif \tau} G(t) \dif \vw.
    \end{align*}
    Writing the above equation in the integral form we find that 
    \begin{equation}
        \tilde \vmu(t)
       = \tilde \vmu(0) 
        +  \int_0^t e^{-\int_0^s F(\tau)\dif \tau} G(s) \dif \vw(s).
        \label{eqn:change-of-variable-tilde-mu}
    \end{equation}
    It is known that from It\^o's isometry~\citep[Corollary 3.1.7]{oksendal2003stochastic} the above It\^o's integral is a Gaussian distribution with variance 
        $\int_0^t e^{-\int_0^s 2F(\tau)\dif \tau} G(s)^2 \dif s$.

    Note that $F(t) = \frac{\dif}{\dif t} \ln \gamma(t)$ and $G(t)^2 = -\frac{\dif}{\dif t} \gamma(t)$, then Eq.~\eqref{eqn:change-of-variable-tilde-mu} becomes
    \begin{equation*}
        \frac{\gamma(0)}{\gamma(t)} \vmu(t) \sim \vmu(0) + \mathcal N\left( 0, \gamma(0)^2\left( \frac{1}{\gamma(t)} - \frac{1}{\gamma(0)} \right) \mI \right),
    \end{equation*}
    which shows that the distribution of $\vmu(t)$ conditioned on $\vmu(0)$ is
    \begin{equation}
        \label{eqn:continuous-cond-mu0}
        \vmu(t) \mid \vmu(0) \sim \mathcal N\left( \frac{\gamma(t)}{\gamma(0)} \vmu(0), \left (\gamma(t) - \frac{\gamma(t)^2}{\gamma(0)} \right ) \mI\right).
    \end{equation}
    Recall that from Eq.~\eqref{eq:bg-cbfn} with $t = 0$, the initial distribution of $\vmu(0)$ given the data $\rvx$ is 
    \begin{equation}
        \label{eqn:continuous-cond-x}
        \vmu(0) \mid \rvx \sim \mathcal N(\gamma(0) \rvx, \gamma(0)(1 - \gamma(0)) \mI).
    \end{equation}
    In view of the above two equations, we see that
    \begin{equation}
        \vmu(t) \mid \rvx \sim \mathcal N(\gamma(t) \rvx, \gamma(t)(1 - \gamma(t)) \mI),
    \end{equation}
    which is the BFN in Eq.~\eqref{eq:bg-cbfn} for any $t \in [0, 1 - \eta]$, and thus completes the proof.
\end{proof}

    Finally, we note that the coefficient $F(t)$ tends to infinity as $t \to 1$, which may violate the regularity assumptions used in It\^o's formula, and in the existence and the uniqueness of the solution of Eq.~\eqref{eq:proof_sde}. 
    Nevertheless, as noted in the paragraph after Theorem~\ref{thm:cbfn-sde}, when we restrict on the time interval $t \in [0, 1 - \eta]$ for any fixed $\eta > 0$, the coefficients of Eq.~\eqref{eq:bg-cbfn} are well-behaved.

\subsection{Proof of Proposition \ref{prop:convergence}}
\label{append:ancestral_sampling}

In this section, we provide the proof of Proposition~\ref{prop:convergence}. As we will see, the BFN sampler is equivalent to solving a reparametrized equation using a variant of the first-order DPM-Solver++~\cite{lu2022dpm++}.

\begin{proof}
We consider the following reparameterization of $\hat \vepsilon$:
\begin{equation}
    \hat \rvx(\vmu, t) := \frac{\vmu}{\gamma(t)} - \sqrt{\frac{1 - \gamma(t)}{\gamma(t)}} \hat \vepsilon(\vmu, t) 
    \quad \Leftrightarrow \quad
    \hat \vepsilon(\vmu,t ) = \sqrt{\frac{\gamma(t)}{1 - \gamma(t)}}\left [\frac{\vmu}{\gamma(t)} - \hat \rvx(\vmu, t) \right ]
    ,
\end{equation}
under which the reverse SDE in Eq.~\eqref{eq:cbfn-para-reverse-sde} is
\begin{align}
    \dif \vmu 
    &= \left(  
        F(t)\vmu + \frac{G(t)^2}{\sqrt{\gamma(t)(1 - \gamma(t))}} \hat \vepsilon(\vmu(t), t) 
     \right) \dif t + G(t) \dif \bar \vw \nonumber \\
    &= \left(
        \frac{\gamma^\prime(t)}{\gamma(t)} \vmu - \frac{\gamma^\prime(t)}{\sqrt{\gamma(t)(1 - \gamma(t))}} \hat \vepsilon(\vmu(t), t) 
     \right) \dif t + \sqrt{-\gamma^\prime(t)} \dif \bar \vw \nonumber \\
    &= \left(
        \frac{\gamma^\prime(t)}{\gamma(t)} \vmu - \frac{\gamma^\prime(t)}{1 - \gamma(t)} \left ( \frac{\vmu}{\gamma(t)} - \hat \rvx(\vmu(t), t) \right )  
     \right) \dif t + \sqrt{-\gamma^\prime(t)} \dif \bar \vw \nonumber \\
    &= \left(
        -\frac{\gamma^\prime(t)}{1 - \gamma(t)}\vmu + \frac{\gamma^\prime(t)}{1 - \gamma(t)} \hat \rvx(\vmu(t), t) 
     \right) \dif t + \sqrt{-\gamma^\prime(t)} \dif \bar \vw.
\end{align}
In the integral form similar to Eq.~\eqref{eqn:change-of-variable-tilde-mu}, for $0 \leq t < s < 1$, the above equation is 
\begin{align}
    \vmu(t) = \frac{1 - \gamma(t)}{1 - \gamma(s)} \vmu(s) + (1 - \gamma(t)) \int_s^t \frac{\gamma^\prime(\tau)}{(1 - \gamma(\tau))^2} \hat \rvx(\vmu(\tau), \tau) \dif \tau
    + \int_s^t \frac{1 - \gamma(t)}{1 - \gamma(\tau)} \sqrt{-\gamma^\prime(\tau)} \dif \bar\vw(\tau).
\end{align}
Approximating $\hat \rvx(\vmu(\tau), \tau)$ using $\hat \rvx(\vmu(s), s)$, and compute the It\^o's integral exactly, we can obtain the following first-order discretization
\begin{align*}
    \vmu(t) 
    &\approx \frac{1 - \gamma(t)}{1 - \gamma(s)} \vmu(s) + \left( \frac{\gamma(t) - \gamma(s)}{1 - \gamma(s)} \right) \hat \rvx(\vmu(s), s)
    + \sqrt{\frac{1 - \gamma(t)}{1 - \gamma(s)}(\gamma(t) - \gamma(s))} \rvu_{s} \\
    &= \frac{\gamma(t)}{\gamma(s)} \vmu(s) 
    - \frac{\gamma(t) - \gamma(s)}{\sqrt{\gamma(s)(1 - \gamma(s))}}  \hat \vepsilon(\vmu(s), s)
    + \sqrt{\frac{1 - \gamma(t)}{1 - \gamma(s)}(\gamma(t) - \gamma(s))} \rvu_{s},
\end{align*}
where $\rvu_{s} = \mathcal{N}(\boldsymbol{0}, \mI)$, which matches the BFN's sampling rule in Eq.~\eqref{eq:bg-cbfn-sample}, and thus completes the proof.
\end{proof}

\subsection{Derivation of BFN-Solvers on Continuous Data}
\label{app:cts_bfn_solver}
We follow the principles in DPM-Solvers~\citep{lu2022dpm,lu2022dpm++} to derive samplers of the ODE in Eq.~\eqref{eq:cbfn-para-ode}.
Let times $0 \leq t < s \leq 1-\eta$ and $\lambda(t) = \frac{1}{2}\log\frac{\gamma(t)}{1-\gamma(t)}$. The exact solution of this ODE can be formulated by ``variation of constants" formula~\cite{atkinson2009numerical}:
\begin{align}
    \vmu(t) &= e^{\int_s^t F(\tau) \dif \tau} \vmu(s) + \int_s^t\left( e^{\int_\tau^t F(r)\dif r} \frac{G(\tau)^2}{ 2 \sqrt{\gamma(\tau)(1-\gamma(\tau))}}\hat{\vepsilon}(\vmu(\tau), \tau) \right) \dif \tau.\label{eq:cts_exact_pro}
\end{align}
Note that $F(t) = \frac{\dif}{\dif t} \ln \gamma(t)$ and $\frac{\dif \lambda(t)}{\dif t} = -\frac{G(t)^2}{\gamma(t)(1-\gamma(t))}$, then the above equation becomes
\begin{align}
    \vmu(t) &= \frac{\gamma(t)}{\gamma(s)}\vmu(s) - \gamma(t) \int_{s}^{t} \frac{\dif \lambda(\tau)}{\dif \tau}e^{-\lambda(\tau)} \hat{\vepsilon}(\vmu(\tau), \tau) \dif \tau \\
    &= \frac{\gamma(t)}{\gamma(s)}\vmu(s) - \gamma(t) \int_{\lambda(s)}^{\lambda(t)} e^{-\lambda} \hat{\vepsilon}(\vmu(t_\lambda(\lambda)), t_\lambda(\lambda)) \dif \lambda,
    \label{eqn:change-of-var-ode-cont}
\end{align}
where the last equation uses the fact that $\lambda(\tau)$ is strict monotone and $t_{\lambda}(\cdot)$ is the inverse function of $\lambda(t)$. 

Let $\hat \ve^{(m)}(\lambda) := \hat \ve^{(m)}(\vmu(t_\lambda(\lambda)), t_\lambda(\lambda)) := \frac{\dif^m \hat \ve(\vmu(t_\lambda(\lambda)), t_\lambda(\lambda))}{\dif \lambda^m}$ be the $m$-th order derivative of the map $\lambda \mapsto \hat \ve(\vmu(t_\lambda(\lambda)), t_\lambda(\lambda))$, then we can approximate $\hat \ve(\vmu(t_\lambda(\lambda)), t_\lambda(\lambda))$ by the Taylor's expansion at $\lambda = \lambda(s)$ for any $k \geq 0$:
\begin{align}
    \hat{\vepsilon}(\vmu(t_\lambda(\lambda)), t_\lambda(\lambda))  =
    \sum_{m=0}^{k-1} \frac{(\lambda - \lambda(s))^m}{m!}\hat\ve^{(m)}(\lambda(s)) + O((\lambda - \lambda(s))^k).
\end{align}
Subsituting the above equation into Eq.~\eqref{eqn:change-of-var-ode-cont} yields 
\begin{align}
    \vmu(t)  &= \frac{\gamma(t)}{\gamma(s)}\vmu(s) -
    \sum_{m=0}^{k-1}\gamma(t) 
   \hat\ve^{(m)}(\lambda(s)) \int_{\lambda(s)}^{\lambda(t)} 
    e^{-\lambda} 
     \frac{(\lambda - \lambda(s))^m}{m!} \dif \lambda + O((\lambda - \lambda(s))^{k+1}).
    \label{eqn:change-of-var-ode-cont-final}
\end{align}
Given an initial value $\vmu_0$ and time steps $\{t_i\}_{i=0}^{n}$ from $t_0 = 1-\eta$ to $t_n = 0$. The solver uses $n$ steps to iteratively compute a sequence $\{\vmu_{i}\}_{i=1}^{n}$ to approximate the true solutions at time $\{t_i\}_{i=1}^n$ using Eq.~\eqref{eqn:change-of-var-ode-cont-final} by setting $t = t_i$ and $s = t_{i-1}$ for each $i = 1, \cdots, n$.
We can use the well-established finite difference method or the Runge-Kutta method to avoid the computation of the high-order derivatives $\hat \ve^{(m)}(\lambda)$, as we will illustrate in the case of discrete data in Appendix.~\ref{app:discrete-bfn} (see also, e.g., \citet{kloeden1992stochastic,lu2022dpm}).

\paragraph{BFN as a DM with a special noise schedule} The derivation above with a noise prediction network are presented mainly to illustrate the idea. In our implementation, we use a data prediction network $\hat \rvx$ as defined in Appendix.~\ref{append:ancestral_sampling}. 
For simplicity, we do not derive the sampler step by step. Instead, a more straightforward way to get BFN-Solver on continuous data is to treat BFN as a DM with a special noise schedule $\alpha(t) = \gamma(t)$ and $\sigma^2(t) = \gamma(t)(1 - \gamma(t))$. We can directly plugin the relation to the samplers proposed in~\citet{lu2022dpm,lu2022dpm++} to obtain our first-order ODE solver BFN-Solver++1, second-order ODE solver BFN-Solver++2 and second-order SDE solver SDE-BFN-Solver++2, as presented in Algorithm~\ref{ag:BFN-Solver-1-cts},~\ref{ag:BFN-Solver-2-cts} and~\ref{ag:SDE-BFN-Solver-2-cts} respectively. 

The plugging idea does not apply to discrete data and we provide a detailed derivation in Appendix.~\ref{app:discrete-bfn}.

\begin{algorithm}[tb]
   \caption{BFN-Solver++1 (on continuous data)}
   \label{ag:BFN-Solver-1-cts}
\begin{algorithmic}
   \STATE {\bfseries Require:} time steps $\{t_i\}_{i=0}^{M}$ from $t_0 = 1- \eta$ to $t_M = 0$, noise prediction model $\hat{\vepsilon}(\vmu, t)$
   \STATE Denote $\bar{\alpha}_{t} = \gamma(t)$, $\bar{\sigma}_{t}= \sqrt{\gamma(t)(1-\gamma(t))}$, $\lambda_t = \log \frac{\bar \alpha_t}{\bar{\sigma}_t}$
   \STATE ${\vmu}_0 \sim \mathcal{N}(\textbf{0}, \gamma(t_0)(1-\gamma(t_0))\mI), K\beta(t_0)\mI)$
   \FOR{$i=1$ {\bfseries to} $M$}
   \STATE $h_i = (\lambda_{t_i} - \lambda_{t_{i-1}})$
   \STATE $\hat{\rvx}_i = \frac{\vmu_{i-1}}{\gamma(t_{i-1})} - \sqrt{\frac{1-\gamma(t_{i-1})}{\gamma(t_{i-1})}} \hat{\vepsilon}(\vmu_{i-1}, t_{i-1})$
   \STATE ${\vmu}_i = \frac{\bar{\sigma}_{t_i}}{\bar{\sigma}_{t_{i-1}}}{\vmu}_{i-1} - \bar{\alpha}_{t_i}(e^{-h_i} - 1) \hat{\rvx}_i$
   \ENDFOR
   \STATE {\bfseries return} $\hat {\rvx}_M$
\end{algorithmic}
\end{algorithm}
\begin{algorithm}[tb]
   \caption{BFN-Solver++2 (on continuous data)}
   \label{ag:BFN-Solver-2-cts}
\begin{algorithmic}
   \STATE {\bfseries Require:} time steps $\{t_i\}_{i=0}^{M}$ from $t_0 = 1- \eta$ to $t_M = 0$, noise prediction model $\hat{\vepsilon}(\vmu, t)$
   \STATE Denote $\bar{\alpha}_{t} = \gamma(t)$, $\bar{\sigma}_{t}= \sqrt{\gamma(t)(1-\gamma(t))}$, $\lambda_t = \log \frac{\bar {\alpha}_t}{\bar{\sigma}_t}$
   \STATE ${\vmu}_0 \sim \mathcal{N}(\textbf{0}, \gamma(t_0)(1-
   \gamma(t_0))\mI), K\beta(t_0)\mI)$, Initialize an empty buffer $Q$.
   \STATE $\hat{\rvx}_{0} = \frac{\vmu_{0}}{\gamma_{t_{0}}} - \sqrt{\frac{1-\gamma_{t_0}}{\gamma_{t_0}}} \hat{\vepsilon}(\vmu_{0}, t_0)$
   \STATE $Q \leftarrow \hat{\rvx}_{0}$
   \STATE $h_1 = (\lambda_{t_1} - \lambda_{t_{0}})$
   \STATE $\vmu_{1} = \frac{\bar{\sigma}_{t_1}}{\bar{\sigma}_{t_0}}\vmu_{0} - \bar{\alpha}_{t_1}(e^{-h_1}-1) \hat{\rvx}_{0}$
    \STATE $\hat{\rvx}_{1} = \frac{\vmu_{1}}{\gamma_{t_{1}}} - \sqrt{\frac{1-\gamma_{t_1}}{\gamma_{t_1}}} \hat{\vepsilon}(\vmu_{1}, t_1)$
   \STATE $Q \leftarrow \hat{\rvx}_{1}$
   \FOR{$i=2$ {\bfseries to} $M$}
   \STATE $h_i = (\lambda_{t_i} - \lambda_{t_{i-1}})$
   \STATE $r_i = \frac{h_{i-1}}{h_i}$
   \STATE $\mD_i = \left(1+ \frac{1}{2r_i}\right)\hat{\rvx}_{{i-1}} - \frac{1}{2r_i}\hat{\rvx}_{i-2}$
   \STATE ${\vmu}_i = \frac{\bar{\sigma}_{t_i}}{\bar{\sigma}_{t_{i-1}}} - \bar{\alpha}_{t_i}(e^{-h_i} - 1)\mD_i$
   \STATE $\hat{\rvx}_{i} = \frac{\vmu_{i}}{\gamma_{t_{i}}} - \sqrt{\frac{1-\gamma(t_i)}{\gamma(t_i)}} \hat{\vepsilon}(\vmu_{i}, t_i)$
   \STATE If $i<M$, then $Q \leftarrow \hat{\rvx}_{i}$
   \ENDFOR
   \STATE {\bfseries return} $\hat{\rvx}_M$
\end{algorithmic}
\end{algorithm}
                          
\begin{algorithm}[tb]
   \caption{SDE-BFN-Solver++2 (on continuous data)}
   \label{ag:SDE-BFN-Solver-2-cts}
\begin{algorithmic}
   \STATE {\bfseries Require:} time steps $\{t_i\}_{i=0}^{M}$ from $t_0 = 1- \eta$ to $t_M = 0$, noise prediction model $\hat{\vepsilon}(\vmu, t)$
   \STATE Denote $\bar{\alpha}_{t} = \gamma(t)$, $\bar{\sigma}_{t}= \sqrt{\gamma(t)(1-\gamma(t))}$, $\lambda_t = \log \frac{\bar \alpha_t}{\bar{\sigma}_t}$
   \STATE ${\vmu}_0 \sim \mathcal{N}(\textbf{0}, \gamma(t_0)(1-
   \gamma(t_0))\mI), K\beta(t_0)\mI)$, Initialize an empty buffer $Q$.
   \STATE $\hat{\rvx}_{0} = \frac{\vmu_{0}}{\gamma_{t_{0}}} - \sqrt{\frac{1-\gamma_{t_0}}{\gamma_{t_0}}} \hat{\vepsilon}(\vmu_{0}, t_0)$
   \STATE $Q \leftarrow \hat{\rvx}_{0}$
   \STATE $h_1 = (\lambda_{t_1} - \lambda_{t_{0}})$
   \STATE $\vmu_{1} = \frac{\bar{\sigma}_{t_1}}{\bar{\sigma}_{t_0}}\vmu_{0} - \bar{\alpha}_{t_1}(e^{-h_1}-1) \hat{\rvx}_{0}$
    \STATE $\hat{\rvx}_{1} = \frac{\vmu_{1}}{\gamma_{t_{1}}} - \sqrt{\frac{1-\gamma_{t_1}}{\gamma_{t_1}}} \hat{\vepsilon}(\vmu_{1}, t_1)$
   \STATE $Q \leftarrow \hat{\rvx}_{1}$
   \FOR{$i=2$ {\bfseries to} $M$}
   \STATE $h_i = (\lambda_{t_i} - \lambda_{t_{i-1}})$
   \STATE $r_1 = \frac{h_{i-1}}{h_i}$
   \STATE $\mD_i = \frac{1}{r_1}(\hat{\rvx}_{i-1} - \hat{\rvx}_{i-2})$
   \STATE $\rvu_{t_{i-1}} \sim \mathcal{N}(\boldsymbol{0}, \mI)$
   \STATE ${\vmu}_i = \frac{\bar{\sigma}_{t_i}}{\bar{\sigma}_{t_{i-1}}}e^{-h_i}\vmu_{i-1} + \bar{\alpha}_{t_i}(1-e^{-2h_i} )\hat{\rvx}_{i-1} + \frac{\bar{\alpha}_t(1-e^{-2h_i})}{2}\mD_i + \bar{\sigma}_t \sqrt{1-e^{-2h_i}} \rvu_{t_{i-1}}$
   \STATE $\hat{\rvx}_{i} = \frac{\vmu_{i}}{\gamma_{t_{i}}} - \sqrt{\frac{1-\gamma(t_i)}{\gamma(t_i)}} \hat{\vepsilon}(\vmu_{i}, t_i)$
   \STATE If $i<M$, then $Q \leftarrow \hat{\rvx}_{i}$
   \ENDFOR
   \STATE {\bfseries return} $\hat{\rvx}_M$
\end{algorithmic}
\end{algorithm}

\section{Derivation for Continuous-time BFN on Discrete Data}
\label{app:discrete-bfn}
\subsection{Proof of Theorem~\ref{thm:dbfn-sde}}
\label{append:dbfn_sde}
\begin{proof}
%
 Similar to the proof in Appendix.~\ref{proof:sde}, let $\tilde \rvz(t) := e^{-\int_0^t H(\tau) \dif \tau} \rvz(t)$, then we have 
    \begin{equation}
        \tilde \rvz(t)
       = \tilde \rvz(0) 
        +  \int_0^t e^{-\int_0^s H(\tau)\dif \tau} L(s) \dif \vw(s).
    \end{equation}
%
    Since $H(t) = \frac{\dif}{\dif t} \ln {\beta(t)}$ and $L(t)^2 = -K \frac{\dif}{\dif t} \beta(t)$, then the above equation becomes
    \begin{equation*}
        \frac{\beta(0)}{\beta(t)} \rvz(t) \sim \rvz(0) + \mathcal N\left( 0, K\beta(0)^2\left( \frac{1}{\beta(t)} - \frac{1}{\beta(0)} \right) \mI \right),
    \end{equation*}
    which shows that the distribution of $\rvz(t)$ conditioned on $\rvz(0)$ is
    \begin{equation}
        \label{eqn:dis-cond-z0}
        \rvz(t) \mid \rvz(0) \sim \mathcal N\left( \frac{\beta(t)}{\beta(0)} \rvz(0), K\left (\beta(t) - \frac{\beta(t)^2}{\beta(0)} \right ) \mI\right).
    \end{equation}
    Recall that from Eq.~\eqref{eq:bg-dbfn} with $t = 0$ we know that
    \begin{equation}
        \label{eqn:dis-cond-x}
        \rvz(0) \mid \rvx \sim \mathcal N(\beta(0) \vw_\rvx, K \beta(0) \mI).
    \end{equation}
    Combining the above two equations, we find that 
    \begin{equation}
        \rvz(t) \mid \rvx \sim \mathcal N(\beta(t) \vw_\rvx, K\beta(t)\mI),
    \end{equation}
    which is the BFN in Eq.~\eqref{eq:bg-dbfn} for any $t \in [0, 1 -\eta]$.
\end{proof}

\subsection{Proof of Theorem~\ref{thm:discrete-dsm}}
\label{app:discrete-dsm}
\begin{proof}
Recall that the loss function on discrete data in Eq.~\eqref{eq:bg-dbfn-train} is
\begin{align}
\mathcal{L}^{\infty}(\rvx)
    &= \E_{\flow(\vtheta|\rvx, t), t\sim U(0, 1)}K\beta_1t  \| \ve_\rvx - \hat{\ve}(\vtheta, t)\|^2,
\end{align}
where $\flow(\vtheta|\rvx, t)$ is specially defined with softmax function to ensure $\vtheta$ lies in the simplex as follows, making its density complex.\footnote{Note that $\vtheta$ lies in a low-dimensional simplex, so the ``density'' should be defined w.r.t. a low-dimensional Lebesgue measure. We intend to define the distribution $q_F$ by a sampling procdure as introduced later.}
\begin{align*}
    \flow(\vtheta|\rvx, t) = \E_{\rvz(t) \sim \mathcal{N}(\beta(t)(K\ve_{\rvx}- \boldsymbol{1}), \beta(t)K\mI)}\delta(\vtheta - \text{softmax}(\rvz(t))).
\end{align*}

Note that to obtain a sample $\vtheta(t) \sim q_F(\vtheta|\vx, t)$, we can first sample $\rvz(t) \sim \mathcal N(\beta(t)(K\ve_{\rvx} - 1), \beta(t)K\mI)$ and apply the deterministic transform $\vtheta(t) := \text{softmax}(\rvz(t))$.
Then, we can rewrite the loss function as
\begin{align*}
    \mathcal{L}^{\infty}(\rvx)
    &= \E_{t \sim U(0, 1)}\E_{\rvz(t) \sim q(\rvz(t)|\rvx)}K\beta_1t  \| \ve_\rvx - \hat{\ve}(\text{softmax}(\rvz(t)), t)\|^2 \\
    &= \E_{t \sim U(0, 1)}\E_{\rvz(t) \sim q(\rvz(t)|\rvx)}K\beta_1t  \| \ve_\rvx - \hat{\ve}_s(\rvz(t), t)\|^2,
\end{align*}
where $q(\rvz(t) | \rvx) = \mathcal{N}(\beta(t)(K\ve_\rvx-1), \beta(t)K\mI)$ whose score function is given by 
\begin{align}
    \nabla_\rvz \log q(\rvz(t) | \rvx) = -
    \frac{\rvz(t)}{K\beta(t)} + \ve_\rvx - \frac{1}{K}.
\end{align}
Let
\begin{align}
\label{eqn:one-to-one-correspondance-of-discrete-loss}
    \hat{\vs}(\rvz(t), t) := -
    \frac{\rvz(t)}{K\beta(t)} + \hat{\ve}_s(\rvz(t), t) - \frac{1}{K}.
\end{align}    
Substituting $\nabla_\rvz \log q(\rvz(t) | \rvx)$ and $\hat{\vs}(\text{softmax}(\rvz(t)), t)$ into the loss function yields
\begin{align}
    \mathcal{L}^{\infty}(\rvx)
    &= \E_{t \sim U(0, 1)}\E_{\rvz(t) \sim q(\rvz(t) | \rvx)}K\beta_1t  \| \ve_\rvx - \hat{\ve}_s(\rvz(t), t)\|^2 \nonumber \\
    &= \E_{t \sim U(0, 1)}\E_{\rvz(t) \sim q(\rvz(t) | \rvx)}K\beta_1t\| \nabla_\rvz \log q(\rvz(t) | \rvx) - \hat{\vs}(\rvz(t), t)\|^2. \label{eqn:discrete-score-matching-loss}
\end{align}

The loss in Eq.~\eqref{eqn:discrete-score-matching-loss} performs the denosing score matching (DSM), and according to~\citet{vincent2011connection}, the optimal solution of it w.r.t. $\hat \vs$ is the score of the distribution of $\rvz(t)$.
Inspecting Eq.~\eqref{eqn:one-to-one-correspondance-of-discrete-loss}, we find that there is a one-to-one correspondance between the parameterized score function $\hat \vs$ in the DSM loss and the function $\hat \ve_s$ in BFN, showing that the optimization of BFN is equivalent to that of DM. 
\end{proof}

\subsection{Proof of Proposition~\ref{thm:dbfn-sample}}
\label{app:dis_bfn_sampler}
\begin{proof}
Recall that the BFN sampler is defined in Eqs.~\eqref{eq:bg-dbfn-cat-sample} and \eqref{eq:bg-dbfn-main-sample}, and $\beta(t) = \beta_1(1 - t)^2$. 
If we remove the categorical sampling step in Eq.~\eqref{eq:bg-dbfn-cat-sample}, the sampling rule becomes
\begin{align}
    \rvz_i &= \rvz_{i-1} + (\beta(t_i) - \beta(t_{i-1})) (K \hat \ve_s(\rvz_{i-1}, t_{i-1}) - 1) + \sqrt{K(\beta(t_i) - \beta(t_{i-1}))} \rvu_i,  \label{eqn:bfn-update-without-cat}
\end{align}
where $\rvu_i \sim \mathcal{N}(\boldsymbol{0}, \mI)$. 
Next, to show that the update rule in Eq.~\eqref{eqn:bfn-update-without-cat} is a first-order discretization of the reverse SDE, 
we write the SDE defined in Eq.~\eqref{eq:dbfn-para-reverse-sde} in the integral form as follows for any $0 \leq t < s \leq 1 - \eta$:
\begin{align}
    \rvz(t) = \rvz(s) - \int_s^t L(\tau)^2 \left [ \hat{\ve}_s(\rvz(\tau), \tau) - \frac{1}{K} \right ] \dif \tau + \int_{s}^t L(\tau) \dif \bar \vw(\tau).
    \label{eqn:discrete-equation-integral-form}
\end{align}
Note that the It\^o integral follows the Gaussian distribution, and by Eq.~\eqref{eqn:discrete-diffusion-coeff}, we know that 
\begin{equation*}
    \int_s^t L(\tau) \dif \bar \vw(\tau) 
    \sim \mathcal N\left ( \boldsymbol{0}, \int_t^s L(\tau)^2 \dif t \mI\right )
    = \mathcal N\left ( \boldsymbol{0}, K (\beta(t)- \beta(s))\mI \right ).
\end{equation*}
Then, by approximating $\hat \ve_s(\rvz(\tau), \tau) = \hat \ve_s(\rvz(s), s) + O(\tau - s)$ in Eq.~\eqref{eqn:discrete-equation-integral-form}, we find 
\begin{align*}
    \rvz(t) &= \rvz(s) - \int_s^t L(\tau)^2 \left [  \hat \ve_s(\rvz(s), s) + O(\tau- s) - \frac{1}{K} \right ] \dif \tau + \int_s^t L(\tau) \dif \bar \vw(\tau) \\
    &= \rvz(s)     + K (\beta(t) - \beta(s)) \left [  \hat \ve_s(\rvz(s), s) + O(t - s) - \frac{1}{K} \right ] + \sqrt{K(\beta(t) - \beta(s))} \rvu_s \\
    &= \rvz(s)     + (\beta(t) - \beta(s)) \left [  K\hat \ve_s(\rvz(s), s) - 1 \right ] + \sqrt{K(\beta(t) - \beta(s))} \rvu_s + O((t - s)^2),
\end{align*}
where $\rvu_s \sim \mathcal{N}(\boldsymbol{0},\mI)$. Now, the proof is completed by setting $t = t_{i}$ and $s = t_{i-1}$ in the above equation, and comparing the result with Eq.~\eqref{eqn:bfn-update-without-cat}.
\end{proof}
\subsection{Derivation of SDE-BFN-Solvers on Discrete Data}
In this section we derive the SDE-BFN-Solvers of solving Eq.~\eqref{eq:dbfn-para-reverse-sde}.
We shall highlight that the SDE to be solved is different from those solved in BFNs and DMs on continuous data, as it contains no linear terms.
We also follow the recipe of DPM-Solvers~\citep{lu2022dpm,lu2022dpm++} in reducing the discretization error to analytically simplify the equation as much as possible, instead of approximating it directly.
The derived algorithms can be found in Algorithm~\ref{ag:SDE-BFN-Solver-1-dis},~\ref{ag:SDE-BFN-Solver-2-dis}.

As we have seen in Appendix.~\ref{app:dis_bfn_sampler}, the BFN sampler without the categorical sampling step (Eq.~\eqref{eqn:bfn-update-without-cat}) is the desired first-order solver, which is called SDE-BFN-Solver1.
To further reduce the discretization error, we consider the first-order approximation of $\hat \ve_s(\rvz(\tau), \tau)$ in Eq.~\eqref{eqn:discrete-equation-integral-form}:
\begin{equation}
    \hat \ve_s(\rvz(\tau), \tau) = \hat \ve_s(\rvz(s), s) + \hat \ve_s^{(1)}(\rvz(s), s) (\tau - s) + O((\tau - s)^2),
\end{equation}
where $\hat \ve_s^{(1)}$ is the derivative of $\hat \ve_s(\rvz(s), s)$ w.r.t. $s$.
Then, setting $t = t_i$ and $s = t_{i-1}$, Eq.~\eqref{eqn:discrete-equation-integral-form} becomes
\begin{align*}
    \rvz(t_{i}) &= \rvz(t_{i-1}) - \int_{t_{i-1}}^{t_i} L(\tau)^2 \left [  \hat \ve_s(\rvz(t_{i-1}), t_{i-1}) +\hat \ve_s^{(1)}(\rvz(t_{i-1}), t_{i-1})(\tau - t_{i-1})  - \frac{1}{K} \right ] \dif \tau 
    \\
    &\phantom{{}={}}+ \int_{t_{i-1}}^{t_i} L(\tau) \dif \bar \vw(\tau)+O((t_i- t_{i-1})^3)  \\
    &= \rvz(t_{i-1})     + (\beta({t_i}) - \beta(t_{i-1})) \left [  K\hat \ve_s(\rvz(t_{i-1}), t_{i-1}) - 1 \right ] 
    \\
    &\phantom{{}={}}- \frac{1}{3}K\beta_1({t_i} - t_{i-1})^2(t_{i-1} + 2{t_i} - 3) \hat \ve^{(1)}_s(\rvz(t_{i-1}), t_{i-1})  \\
    &\phantom{{}={}}+ \sqrt{K(\beta(t_i) - \beta(t_{i-1}))} \mathcal N(0, \mI) + O((t_i - t_{i-1})^3).
\end{align*}
Finally, we approximate the derivative $\hat \ve_s^{(1)}$ by a finite difference as follows to obtain the final discretization algorithm, namely SDE-DPM-Solver2.
\begin{equation}
\hat \ve_s^{(1)}(\rvz_{i-1}, t_{i-1}) \approx 
\frac{\hat \ve_s(\rvz_{i-2}, t_{i-2}) - \hat \ve_s(\rvz_{i-1}, t_{i-1})}{t_{i-2} - t_{i-1}}.
\end{equation}

\subsection{Derivation of BFN-Solvers on Discrete Data}
Eq.~\label{app:dis_bfn_solver}
In this section, we derive the BFN-Solvers on discrete data using the following integral form of the parameterized ODE in~\eqref{eq:dbfn-para-ode}.
\begin{align}
\label{eqn:discrete-bfn-integral-form}
    \rvz(t) = \frac{1-t}{1-s}\rvz(s) + \beta_1(1-t)(t-s) - K\beta_1(1-t)\int_s^t \hat{\ve}_s(\rvz(\tau), \tau) \dif \tau.
\end{align}
\paragraph{BFN-Solver1}
The first-order solver approximates $\hat \ve_s(\rvz(\tau), \tau)$ in the above integral by $\hat \ve_s(\rvz(s), s)$ directly, yielding the following update rule:
\begin{equation}
\label{eqn:bfn-solver-1}
    \rvz_i = \frac{1-t_i}{1-t_{i-1}}\rvz_{i-1} + \beta_1(1-t_i)(t_i-t_{i-1})(1 - K\hat\ve_s(\rvz_{i-1}, t_{i-1})).
\end{equation}
%
%
%
%
\paragraph{BFN-Solver2}
The second-order solver approximates $\hat \ve_s(\rvz(\tau), \tau)$  with $\hat \ve_s(\rvz(s), s) + \hat \ve_s^{(1)}(\rvz(s), s)(\tau - s)$, where $\hat \ve_s(\rvz(s), s)$ is the derivative of $\hat \ve_s$ w.r.t. $s$.
Using such an approximation, we have
\begin{align}
    \int_s^t \hat{\ve}_s(\rvz(\tau), \tau) \dif \tau &= \int_s^t \hat{\ve}_s(\rvz(s), s) + \hat{\ve}_s^{(1)}(\rvz(s), s) (\tau - s) + O(\tau-s)^2  \dif \tau \nonumber \\
    &= \hat{\ve}_s(\rvz(s), s)(t-s) + \hat{\ve}_s^{(1)}(\rvz(s), s) \frac{(t-s)^2}{2} + O((t-s)^3). \label{eqn:bfn-solver2-approx}
\end{align}
Following the method used in DPM-Solver~\citep{lu2022dpm}, we use an intermediate time $r\in (s, t)$ to approximate the derivative term.
We use the first-order approximation in Eq.~\eqref{eqn:bfn-solver-1} to compute $\rvz(r)$ and the finite difference to estimate the derivative $\hat \ve_s^{(1)}$:
\begin{align}
    \rvz(r)&= \frac{1-t}{1-s}\rvz(s) + \beta_1(1-r)(r-s)(1 - K\hat{\ve}_s(\rvz(s), s)) + O((r - s)^2),\\
    \hat{\ve}_s^{(1)}(\rvz(s), s) &= \frac{\hat{\ve}_s(\rvz(r), r) - \hat{\ve}_s(\rvz(s), s)}{ r - s} + O(r - s).
\end{align}
Combining the above equations with Eq.~\eqref{eqn:bfn-solver2-approx}, we find that
\begin{align}
    \int_s^t \hat{\ve}_s(\rvz(\tau), \tau) \dif \tau 
    &= \hat{\ve}_s(\rvz(s), s)(t-s) + \frac{(\hat{\ve}_s(\rvz(r), r) - \hat{\ve}_s(\rvz(s), s))(t-s)^2}{2( r - s)}+ O((t-s)^3).
\end{align}
Finally, let $\eta > 0$ be an arbitrarily small constant and choose time steps $\{t_i\}_{i=0}^{M}$ from $t_0 = 1- \eta$ to $t_M = 0$. Given an initial value sample, $\{\rvz_i\}_{i=1}^M$ is computed iteratively as follows, by choosing $(s, t, r) := \left(t_{i-1}, t_i, \frac{t_i + t_{i-1}}{2}\right)$ for each $i$ in the above derivation.
\begin{align}
    \rvz_{i-1/2} &= \frac{1-t_{i-1/2}}{1-t_{i-1}}\rvz_{i-1} + \beta_1(1-t_{i-1/2})(t_{i-1/2}-t_{i-1})(1 - K\hat{\ve}_s(\rvz_{i-1}, t_{i-1})),\\
    \rvz_i &= \frac{1-t_i}{1-t_{i-1}}\rvz_{i-1} + \beta_1(1-t_i)(t_i-t_{i-1})  
    - c(t_i)(t_i - t_{i-1}) \hat \ve_s(\rvz_{i-1}, t_{i-1}) \nonumber \\
    &\phantom{{}={}}- c(t_i) \frac{(t_i - t_{i-1})^2}{2(t_{i - 1/2} - t_{i-1})} (\hat\ve_s(\rvz_{i - 1/2}, t_{i - 1/2}) - \hat \ve_s(\rvz_{i-1}, t_{i-1})).
\end{align}
where $t_{i-1/2} = (t_i + t_{i-1})/2$ and $c(t) = K\beta_1(1-t)$.

\begin{algorithm}[tb]
   \caption{SDE-BFN-Solver1 (on discrete data)}
   \label{ag:SDE-BFN-Solver-1-dis}
\begin{algorithmic}
   \STATE {\bfseries Require:} time steps $\{t_i\}_{i=0}^{M}$, from $t_0 = 1 - \eta$ to $t_M = 0$, model $\hat{\ve}_s(\rvz, t)$, $\beta(t) = \beta_1(1-t)^2$
   \STATE $\rvz_0 \sim \mathcal{N}(\boldsymbol{0}, K\beta(t_0)\mI)$
   \FOR{$i=1$ {\bfseries to} $M-1$}
   \STATE $\rvu_i \sim \mathcal{N}(\boldsymbol{0}, \mI)$
   \STATE $\rvz_{i} = \rvz_{i-1} + (\beta(t_{i}) - \beta(t_{i-1}))(K\hat{\ve}_s(\rvz_{i-1}, t_{i-1}))-1) + \sqrt{K(\beta(t_{i}) - \beta(t_{i-1}))} \rvu_i$
   \ENDFOR
   \STATE $\hat {\rvx} = \text{argmax}(\hat{\ve}_s(\rvz_{M-1}, t_{M-1}))$
   \STATE {\bfseries return} $\hat {\rvx}$
\end{algorithmic}
\end{algorithm}
\begin{algorithm}[tb]
   \caption{SDE-BFN-Solver2 (on discrete data)}
   \label{ag:SDE-BFN-Solver-2-dis}
\begin{algorithmic}
   \STATE {\bfseries Require:} time steps $\{t_i\}_{i=0}^{M}$, from $t_0 = 1 - \eta$ to $t_M = 0$, model $\hat{\ve}_s(\rvz, t)$, $\beta(t) = \beta_1(1-t)^2$
   \STATE $\rvz_0 \sim \mathcal{N}(\boldsymbol{0}, K\beta(t_0)\mI)$, Initialize an empty buffer $Q$
   \STATE $\rvu_1 \sim \mathcal{N}(\boldsymbol{0}, \mI)$
   \STATE $Q \leftarrow \hat{\ve}_s(\rvz_{0}, t_{0})$
   \STATE $\rvz_{1} = \rvz_{0} + (\beta(t_{1}) - \beta(t_{0}))(K\hat{\ve}_s(\rvz_{0}, t_{0}))-1) + \sqrt{K(\beta(t_{1}) - \beta(t_{0}))} \rvu_i$
   \FOR{$i=2$ {\bfseries to} $M-1$}
   \STATE $D_1 = \frac{\hat{\ve}_s(\rvz_{i-2}, t_{i-2}) -\hat{\ve}_s(\rvz_{i-1}, t_{i-1})}{t_{i-2} - t_{i-1}}$
   \STATE $\rvu_i \sim \mathcal{N}(\boldsymbol{0}, \mI)$
   \STATE $\rvz_{i} = \rvz_{i-1} + (\beta(t_{i}) - \beta(t_{i-1}))(K\hat{\ve}_s(\rvz_{i-1}, t_{i-1}))-1) - \frac{1}{3}K\beta_1(t_i-t_{i-1})^2 (t_{i-1} + 2t_i -3) D_1 + \sqrt{K(\beta(t_{i}) - \beta(t_{i-1}))} \rvu_i$
   \STATE If $i<M-1$, then $Q \leftarrow \hat{\ve}_s(\rvz_{i-1}, t_{i-1})$
   \ENDFOR
   \STATE $\hat {\rvx} = \text{argmax}(\hat{\ve}_s(\rvz_{M-1}, t_{M-1}))$
   \STATE {\bfseries return} $\hat {\rvx}$
\end{algorithmic}
\end{algorithm}
\begin{algorithm}[tb]
   \caption{BFN-Solver1 (on discrete data)}
   \label{ag:BFN-Solver-1-dis}
\begin{algorithmic}
   \STATE {\bfseries Require:} time steps $\{t_i\}_{i=0}^{M}$, from $t_0 = 1 - \eta$ to $t_M = 0$, model $\hat{\ve}_s(\rvz, t)$, $\beta(t) = \beta_1(1-t)^2$
   \STATE $\rvz_0 \sim \mathcal{N}(\boldsymbol{0}, K\beta(t_0)\mI)$
   \FOR{$i=1$ {\bfseries to} $M-1$}
   \STATE $\Tilde{\rvz}_i = \frac{1-t_i}{1-t_{i-1}}\rvz_{i-1} + \beta_1(1-t_i)(t_i-t_{i-1})(1 - K\hat{\ve}_s({\rvz}_{i-1}, t_{i-1}))$
   \ENDFOR
   \STATE $\hat {\rvx} = \text{argmax}(\hat{\ve}_s(\rvz_{M-1}, t_{M-1}))$
   \STATE {\bfseries return} $\hat {\rvx}$
\end{algorithmic}
\end{algorithm}
\begin{algorithm}[tb]
   \caption{BFN-Solver2 (on discrete data)}
   \label{ag:BFN-Solver-2-dis}
\begin{algorithmic}
   \STATE {\bfseries Require:} time steps $\{t_i\}_{i=0}^{M}$, from $t_0 = 1 - \eta$ to $t_M = 0$, model $\hat{\ve}_s(\rvz, t)$, $\beta(t) = \beta_1(1-t)^2$ 
   \STATE $\rvz_0 \sim \mathcal{N}(\boldsymbol{0}, K\beta(t_0)\mI)$
   \FOR{$i=1$ {\bfseries to} $M-1$}
    \STATE $t_{i-1/2} = (t_i + t_{i-1})/2$
    \STATE $c_i = K\beta_1(1-t_i)$
    \STATE $D_1 = \frac{\hat{\ve}_s({\rvz}_{i-1/2}, t_{i-1/2}) - \hat{\ve}_s({\rvz}_{i-1}, t_{i-1})}{t_{i-1/2} - t_{i-1}}$
    \STATE $\rvz_i = \frac{1-t_i}{1-t_{i-1}}\rvz_{i-1} + \beta_1(1-t_i)(t_i-t_{i-1}) - c_i(t_i-t_{i-1})\hat{\ve}_s({\rvz}_{i-1}, t_{i-1}) -  \frac{c_i(t_i-t_{i-1})^2}{2}D_1$
   \ENDFOR
   \STATE $\hat {\rvx} = \text{argmax}(\hat{\ve}_s(\rvz_{M-1}, t_{M-1}))$
   \STATE {\bfseries return} $\hat {\rvx}$
\end{algorithmic}
\end{algorithm}

\section{Experimental Details}
\label{app:experimental_details}

\subsection{Choices of the Initialization Distribution} \label{sec:optimal-init}

Since the exact distribution of $\vmu(1 - \eta)$ is unknown, we need to choose an approximation of it as the initialization distribution.
The following proposition identifies the best initial distribution among isotropic Gaussian distributions.
\begin{proposition}
    Let $p_t(\vmu) = \E_{\rvx \sim p_{data}} q_F(\vmu|\rvx, \gamma(t))$ be the distribution of $\vmu(t)$ in Eq.~\eqref{eq:bg-cbfn}, then $q_t(\vmu) := \mathcal N(\vmu | \vm^*(t), (\sigma^*(t))^2 \mI)$ minimizes the Kullback-Leibler (KL) divergence between $p_t$ and any isotropic Gaussian distributions, where
    \begin{equation*}
        \vm^*(t) = \gamma(t) \vm_{data}, \quad \text{and} \quad
        (\sigma^*(t))^2 
        = \gamma(t)(1 - \gamma(t)) + \gamma(t)^2 \frac{\mathrm{Tr}\,\Sigma_{data}}{D},
    \end{equation*}
    $D$ is the dimensionality of $\vmu$, and $\vm_{data}, \Sigma_{data}$ are the mean and the covariance matrix of the data distribution $p_{data}$, respectively.
    In other words, we have
    \begin{equation}
        (\vm^*(t), \sigma^*(t)) = \argmin_{\vm, \sigma} D_{\rm KL}(p_t \| \mathcal N(\vm, \sigma^2 \mI)), 
    \end{equation}
    where $D_{\rm KL}$ is the Kullback-Leibler divergence.
\end{proposition}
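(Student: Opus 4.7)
The plan is to expand the KL divergence and minimize over $\vm$ and $\sigma^2$ in turn using standard Gaussian moment-matching arguments, then evaluate the resulting first and second moments of $p_t$ via the tower property.

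First I would write
\begin{align*}
D_{\rm KL}(p_t \| \mathcal N(\vm, \sigma^2 \mI)) &= -H(p_t) - \E_{\vmu \sim p_t}\left[\log \mathcal N(\vmu | \vm, \sigma^2 \mI)\right] \\
&= -H(p_t) + \frac{D}{2}\log(2\pi\sigma^2) + \frac{1}{2\sigma^2}\E_{\vmu \sim p_t}\left[\|\vmu - \vm\|^2\right],
\end{align*}
so only the last two terms depend on $(\vm, \sigma^2)$. Minimizing the last term in $\vm$ for fixed $\sigma^2$ is a standard mean-squared-error problem, giving the optimal mean $\vm^* = \E_{p_t}[\vmu]$ and residual $\E_{p_t}[\|\vmu - \vm^*\|^2] = \mathrm{Tr}\,\mathrm{Cov}_{p_t}[\vmu]$. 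Plugging this back and differentiating in $\sigma^2$ yields $(\sigma^*)^2 = \frac{1}{D}\mathrm{Tr}\,\mathrm{Cov}_{p_t}[\vmu]$, which is the usual Gaussian MLE variance.

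Next I would compute the two moments of $p_t$ using the hierarchical structure $\rvx \sim p_{data}$ and $\vmu \mid \rvx \sim \mathcal N(\gamma(t)\rvx, \gamma(t)(1-\gamma(t))\mI)$ from Eq.~\eqref{eq:bg-cbfn}. The tower property gives
\begin{equation*}
\E_{p_t}[\vmu] = \E[\gamma(t)\rvx] = \gamma(t)\vm_{data},
\end{equation*}
and the law of total covariance gives
\begin{equation*}
\mathrm{Cov}_{p_t}[\vmu] = \E[\mathrm{Cov}[\vmu \mid \rvx]] + \mathrm{Cov}[\E[\vmu \mid \rvx]] = \gamma(t)(1-\gamma(t))\mI + \gamma(t)^2 \Sigma_{data}.
\end{equation*}
Taking the normalized trace reproduces exactly the stated $(\sigma^*(t))^2$.

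There is no real obstacle here: the argument is a textbook Gaussian projection (variational M-projection) combined with two lines of moment bookkeeping. The only thing to check carefully is that $p_t$ has finite second moments so that the optimization is well-posed, which follows immediately from the assumption that $p_{data}$ has finite covariance $\Sigma_{data}$ and from the fact that the Gaussian kernel $q_F(\vmu|\rvx,\gamma(t))$ adds only bounded isotropic noise. This also ensures all expectations above exchange with the inner integration over $\rvx$.
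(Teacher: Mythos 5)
Your proposal is correct and follows essentially the same route as the paper's proof: both reduce the problem to minimizing the cross-entropy term of the KL divergence, obtain $\vm^*$ and $(\sigma^*)^2$ by moment matching (mean and normalized trace of the covariance of $p_t$), and then evaluate these moments from the hierarchical structure of Eq.~\eqref{eq:bg-cbfn}. Your explicit use of the law of total covariance merely spells out a computation the paper leaves implicit.
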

\begin{proof}
    Since $p_F(\vmu(t) | \rvx, \gamma(t)) = \mathcal N(\gamma(t) \rvx, \gamma(t)(1 - \gamma(t))\mI)$, the KL divergence can be written as 
    \begin{align*}
        D_{\rm KL}(p_t \| \mathcal N(\vm, \sigma^2 \mI))
        &=  -\E_{\vmu \sim p_t} \log \mathcal N(\vmu | \vm, \sigma^2 \mI) + \E_{\vmu \sim p_t} \log p_t(\vmu) \\
        &=  \E_{\vmu \sim p_t} \left [ \frac{D}{2} \log (2\pi) + D \log \sigma + \frac{\| \vmu - \vm \|^2}{2\sigma^2}  \right ]+ \E_{\vmu \sim p_t} \log p_t(\vmu),
    \end{align*}
    where $D$ is the dimensionality of $\vmu$. By taking the derivative, we know the minimizer is
    \begin{align*}
        \vm^*(t) &= \E_{\vmu \sim p_t} \vmu 
        = \gamma(t) \vm_{data}, \\
        (\sigma^*(t))^2 &= \frac{1}{D} \E_{\vmu \sim p_t} \|\vmu - \vm^*(t)\|^2
        = \gamma(t)(1 - \gamma(t)) + \gamma(t)^2 \frac{\mathrm{Tr}\,\Sigma_{data}}{D}.
    \end{align*}
\end{proof}
However, the optimal distribution in the above proposition depends on the mean and the covariance matrix of the data distribution, which are still unknown.
Fortunately, we find that for sufficiently small $\eta$, the effect of data-dependent terms is negligible. Since when $\eta \to 0$ we know $\gamma(t) \to 0$ and by the proposition the optimal variance is $\gamma(t)(1 - \gamma(t)) + o(\gamma(t))$, so a typical sample from the optimal distribution has the norm $\sqrt{D \gamma(t) (1-\gamma(t))}$,
which dominates the optimal mean $\gamma(t) \vm_{data}$ as $\gamma(t) \to 0$.
Therefore, we suggest the data-independent distribution $\mathcal N(\boldsymbol{0}, \gamma(t)(1 - \gamma(t))\mI))$ as the initial distribution.

We note that although the above proposition is proved for the BFN with continuous data, a similar result also holds for discrete data.

\subsection{Final Step of Sampling}
Theoretically, we need to solve the reverse SDEs or ODEs from time $1-\eta$ to $0$ to generate samples. In practice, the BFN sampler adds a step where it uses the state obtained from the initial $n$ iterations, denoted by $\vtheta_{n}$, to run the network an additional time. The output from this final network is then used as the final sample. We follow this sampling trick as our default sampling method at the final step.

\subsection{User Study for Text Generation}
In this section, we give the details of the user study designed to compare the quality of text generated from BFN-Solvers and BFN from the human perspective. We instruct participants to select the text samples they perceive as higher quality, providing them with authentic examples from the test dataset to serve as benchmarks for their evaluations. For each study, we collect 500 human answers from 10 participants to ensure a sufficient sample size to analyze the preferences.

\subsection{Ablation of Categorical Sampling Step in BFN Sampler on Discrete Data}
\label{app:ablation}
In this section, we present the detailed results of Sec.~\ref{sec:exp_fast_continous} in Table~\ref{tab:without_cs}. We explicitly analyze the sampling dynamics of the original BFN sampler with and without CS. We generate two sets of trajectories, denoted as $\rvz_{\text{w/cat}}(t)$ and $\rvz_{\text{w/o cat}}(t)$, using identical Gaussian noises and then compute the $1$-norm of the difference between these trajectories at each time point $t$ and sum them. We replicate the sampling $1000$ times independently and compute the mean, which represents the disparity between two samplers. As shown in Table~\ref{tab:l1_without_cs}, $\|\rvz_{\text{w/cat}} - \rvz_{\text{w/o cat}}\|_1$ gets smaller as the step size decreases. 

\begin{table}[ht!]
\caption{\textbf{Ablation of the categorical sampling (CS) step in the BFN sampler on discrete data.} We present the SA$\uparrow$.}
\label{tab:without_cs}
\vskip 0.15in
\begin{center}
\begin{small}
\begin{sc}
\begin{tabular}{lccccccccr}
\toprule
NFE                      & 10    & 12    & 20    & 30    & 40    & 50    & 100   & 200   & 1000  \\
\midrule
BFN                      & 66.27 & 70.73 & 79.54 & 82.65 & 84.01 & 84.88 & \textbf{86.14} & 86.50 & 86.69 \\
BFN w/o CS (ours)             & \textbf{70.67} & \textbf{73.63} & \textbf{80.86} & \textbf{83.26} & \textbf{84.27} & \textbf{84.90} & 86.11 & \textbf{86.76} & \textbf{86.77} \\
\bottomrule
\end{tabular}
\end{sc}
\end{small}
\end{center}
\vskip -0.1in
\end{table}

\begin{table}[ht!]
\caption{\textbf{Ablation of the categorical sampling (CS) step in the BFN sampler on discrete data.} We present the $L1$ norm of the difference between sampling trajectories of the original BFN sampler with and without CS, varying the step size.}
\label{tab:l1_without_cs}
\vskip 0.15in
\begin{center}
\begin{small}
\begin{sc}
\begin{tabular}{lccccccr}
\toprule
Step size     & 0.1    & 0.05    & 0.02    & 0.01    & 0.005    & 0.002    &0.001   \\
\midrule
$\|\rvz_{\text{w/cat}} - \rvz_{\text{w/o cat}}\|_1$  & 0.18 & 0.16 & 0.13 & 0.10 & 0.087 & 0.066 & 0.049 \\ 
\bottomrule
\end{tabular}
\end{sc}
\end{small}
\end{center}
\vskip -0.1in
\end{table}

\section{Additional results}

\subsection{Additional Results on Continuous Dataset}
\label{app:results_continuous}
In this section, we present the detailed results of Sec.~\ref{sec:exp_fast_continous}. As shown in Tab.~\ref{tab:fid_nfe}, our proposed methods obtain the best results under all NFE.

\begin{table}[t!]
\caption{\textbf{Image generation results on continuous CIFAR-10 dataset.} Sampling quality is measured by FID $\downarrow$, varying the number of function evaluations (NFE). We \textbf{bold} the best result under the
corresponding setting. For instance, we underline the result of BFN at 50 steps and our BFN-Solvers2++ solver at 10 steps, where we achieve a speed-up of 5 times.}
\label{tab:fid_nfe}
\vskip 0.15in
\begin{center}
\begin{small}
\begin{sc}
\begin{tabular}{lccccccr}
\toprule
NFE                      & 10       & 20       & 50      & 100     & 200    & 500    & 1000  \\ 
\midrule
BFN                      & 253.23   & 146.19   & \underline{74.81}   & 45.50   & 27.98  & 16.24  & 13.05 \\
SDE-BFN-Solver++2 (ours) & {77.19}    & 49.73    & 34.29   & \textbf{23.09}   & \textbf{16.07}  & \textbf{11.94} & \textbf{10.86}  \\
BFN-Solver++1 (ours)     & {66.27}    & 49.18    & 36.27   & 30.43   & 26.62  & 21.49  & 20.39 \\
BFN-Solver++2 (ours)     & \underline{\textbf{55.87}}  & \textbf{43.63}   & \textbf{33.01}   & 27.92   & 23.89  & 19.48 & 20.51 \\
\bottomrule
\end{tabular}
\end{sc}
\end{small}
\end{center}
\vskip -0.1in
\end{table}

\subsection{Additional Results of Discrete Dataset}
\label{app:results_discrete}
In this section, we present the detailed results of Sec.~\ref{sec:exp_fast_discrete}. As shown in Tab.~\ref{tab:spelling_nfe}, our best
solver significantly outperforms the original BFN sampler
with a few (e.g., 10) NFEs under the SA metric.

\begin{table}[t!]
\caption{\textbf{Text generation results on discrete text8 dataset.} Sampling quality is measured by SA $\uparrow$, varying the number of function evaluations (NFE). For instance, we underline the result of BFN at 1000 steps and our SDE-BFN-Solvers2 solver at 50 steps, where we achieve a speed-up of 20 times.} 
\label{tab:spelling_nfe}
\vskip 0.15in
\begin{center}
\begin{small}
\begin{sc}
\begin{tabular}{lccccccccr}
\toprule
NFE                      & 10    & 12    & 20    & 30    & 40    & 50    & 100   & 200   & 1000  \\
\midrule
BFN                      & 66.27 & 70.73 & 79.54 & 82.65 & 84.01 & 84.88 & 86.14 & 86.50 & \underline{86.69} \\
SDE-BFN-Solver2 (ours)  & 80.29 & 82.39 & \textbf{85.27} & \textbf{86.03} & \textbf{86.63} & \underline{\textbf{86.82}} & \textbf{87.04} & \textbf{86.74} & \textbf{86.78} \\
BFN-Solver1 (ours)              & 78.61 & 80.21 & 82.46 & 83.76 & 84.18 & 84.53 & 85.25 & 85.39 & 85.61 \\
BFN-Solver2 (ours)             & \textbf{82.27} & \textbf{83.23} & 84.46 & 85.12 & 85.34 & 85.46 & 85.53 & 85.57 & 85.60 \\
\bottomrule
\end{tabular}
\end{sc}
\end{small}
\end{center}
\vskip -0.1in
\end{table}

\subsection{Analysis of $\eta$ on Continuous Data}
\label{app:analysis_eta_continous}
In this section, we provide an analysis of the hyperparameter $\eta$ for continuous data. According to the analyses in Appendix~\ref{sec:optimal-init}, we start sampling from the approximate prior distribution $\tilde{p}\left(\vmu(1 - \eta) \right) = \mathcal{N}\left(\boldsymbol{0}, \gamma(1-\eta)(1 - \gamma(1-\eta))\mI \right)$. Firstly, we present the FID results on the continuous CIFAR-10 dataset. We show the results with $50$ and $500$ NFE for efficiency. As shown in Tab.~\ref{tab:analysis_eta_continous}, excessively small or large values of $\eta$ detrimentally affect image generation quality. 

In addition, we found that SDE-based method (i.e., SDE-BFN-Solver++2) is less sensitive to $\eta$ compared to ODE-based methods (i.e., BFN-Solver++1, BFN-Solver++2). Assuming the discretization error is negligible, \citet{nie2023blessing} theoretically elucidates why SDE-based samplers outperform ODE-based samplers in sampling from the approximate prior distribution. This provides a theoretical foundation for our observations. 

At last, as present in Sec.~\ref{sec:exp_fast_continous} and Tab.~\ref{tab:fid_nfe}, after tuning $\eta$, ODE-based samplers still outperform SDE-based samplers and BFN baseline with a few NFEs (e.g., 10) under sample quality.

\begin{table}[t!]
\caption{\textbf{Empirical study of $\eta$.} We present the FID $\downarrow$ results of image generation on continuous CIFAR-10 dataset varying $\eta$.}
\label{tab:analysis_eta_continous}
\vskip 0.15in
\begin{center}
\begin{small}
\begin{sc}
\begin{tabular}{lccccccr}
\toprule
 & \multicolumn{3}{c}{NFE=50} & \multicolumn{3}{c}{NFE=500} \\
\cmidrule(lr){2-4} \cmidrule(lr){5-7} 
Values of  $\eta$          & 0.01        & 0.001       & 0.0001      & 0.001      & 0.0001   & 0.00001  \\ 
\midrule
BFN-Solver++1 (ours)        & 50.79       & 48.68       & 111.14      & 21.49      & 49.13    & 220.37   \\ 
BFN-Solver++2 (ours)        & 48.29       & 43.84       & 106.44      & 19.48      & 45.95    & 214.17   \\ 
SDE-BFN-Solver++2 (ours)    & 41.23       & 36.13       & 34.29       & 12.33      & 14.49    & 27.39     \\ 
\bottomrule
\end{tabular}
\end{sc}
\end{small}
\end{center}
\vskip -0.1in
\end{table}

\subsection{Analysis of $\eta$ on Discrete Data}
\label{app:analysis_eta_discrete}
In this section, we provide an analysis of hyperparameter $\eta$ for discrete data. According to the analyses in Appendix~\ref{sec:optimal-init}, we start sampling from the approximate prior distribution $\tilde{p}\left(\rvz(1 - \eta) \right) =  
 \mathcal{N}\left(\boldsymbol{0}, K\beta(1-\eta)\mI\right)$ which works well empirically. As shown in Tab.~\ref{tab:analysis_eta_dis}, For lower NFEs(20) or higher NFEs(200), the variation in $\eta$ shows minor differences in SA for BFN-Solver1, BFN-Solver2 and SDE-BFN-Solver2. This suggests that the choice of $\eta$ does not significantly impact the performance of these solvers on discrete data.

\begin{table}[t!]
\caption{\textbf{Empirical study of $\eta$.} We present the SA $\uparrow$ results of text generation on continuous text8 dataset varying $\eta$.}
\label{tab:analysis_eta_dis}
\vskip 0.15in
\begin{center}
\begin{small}
\begin{sc}
\begin{tabular}{lccccccr}
\toprule
 & \multicolumn{3}{c}{NFE=20} & \multicolumn{3}{c}{NFE=200} \\
\cmidrule(lr){2-4} \cmidrule(lr){5-7} 
Values of  $\eta$          & 0.01        & 0.001       & 0.0001      & 0.01      & 0.001   & 0.0001  \\ 
\midrule
BFN-Solver1 (ours)        & 82.56       & 82.39       & 82.46      & 85.40      & 85.38    & 85.39   \\ 
BFN-Solver2 (ours)        & 84.46       & 84.43       & 84.46      & 85.57      & 85.56    & 85.52   \\ 
SDE-BFN-Solver2 (ours)    & 84.98       & 85.28       & 85.27       & 86.65      & 86.68    & 86.74     \\ 
\bottomrule
\end{tabular}
\end{sc}
\end{small}
\end{center}
\vskip -0.1in
\end{table}

\subsection{Random Samples}
\label{app:random_samples}

Additional sampling results on CIFAR-10 are shown in Fig.~\ref{fig:samples_cifar}. 

Additional sampling results on text8 are shown in Figs.~\ref{fig:samples_text8_10} and ~\ref{fig:samples_text8_1000}.
\begin{figure}[ht]
\vskip 0.2in
\begin{center}
\centerline{\includegraphics[width=\columnwidth]{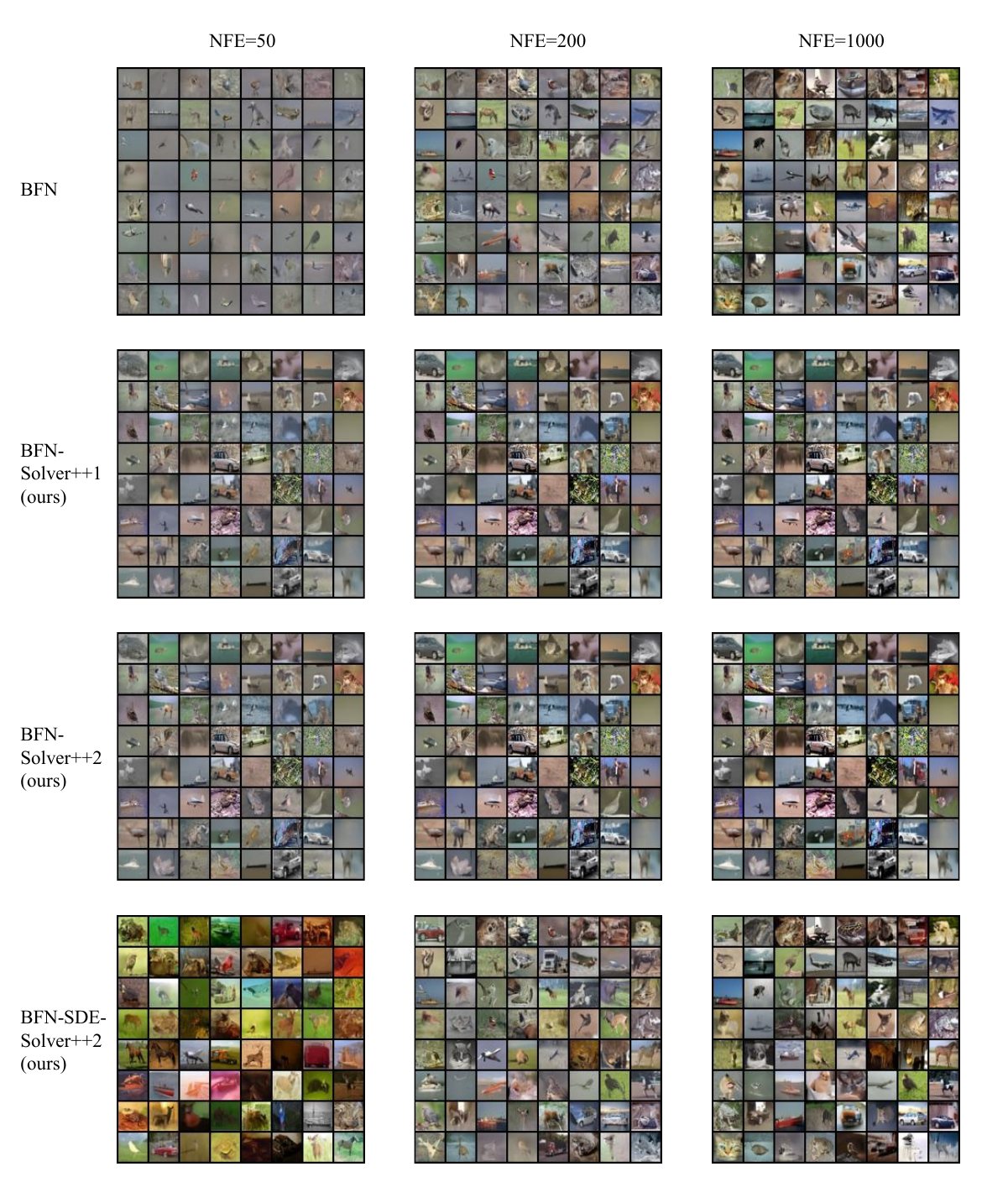}}
\caption{Randomly generated images by BFN and BFN-Solvers (ours) with \textbf{50}, \textbf{200}, \textbf{1000} NFEs, using the same pre-trained model on CIFAR-10.}
\label{fig:samples_cifar}
\end{center}
\vskip -0.2in
\end{figure}

\begin{figure}[ht]
\vskip 0.2in
\begin{center}
\centerline{\includegraphics[width=\columnwidth]{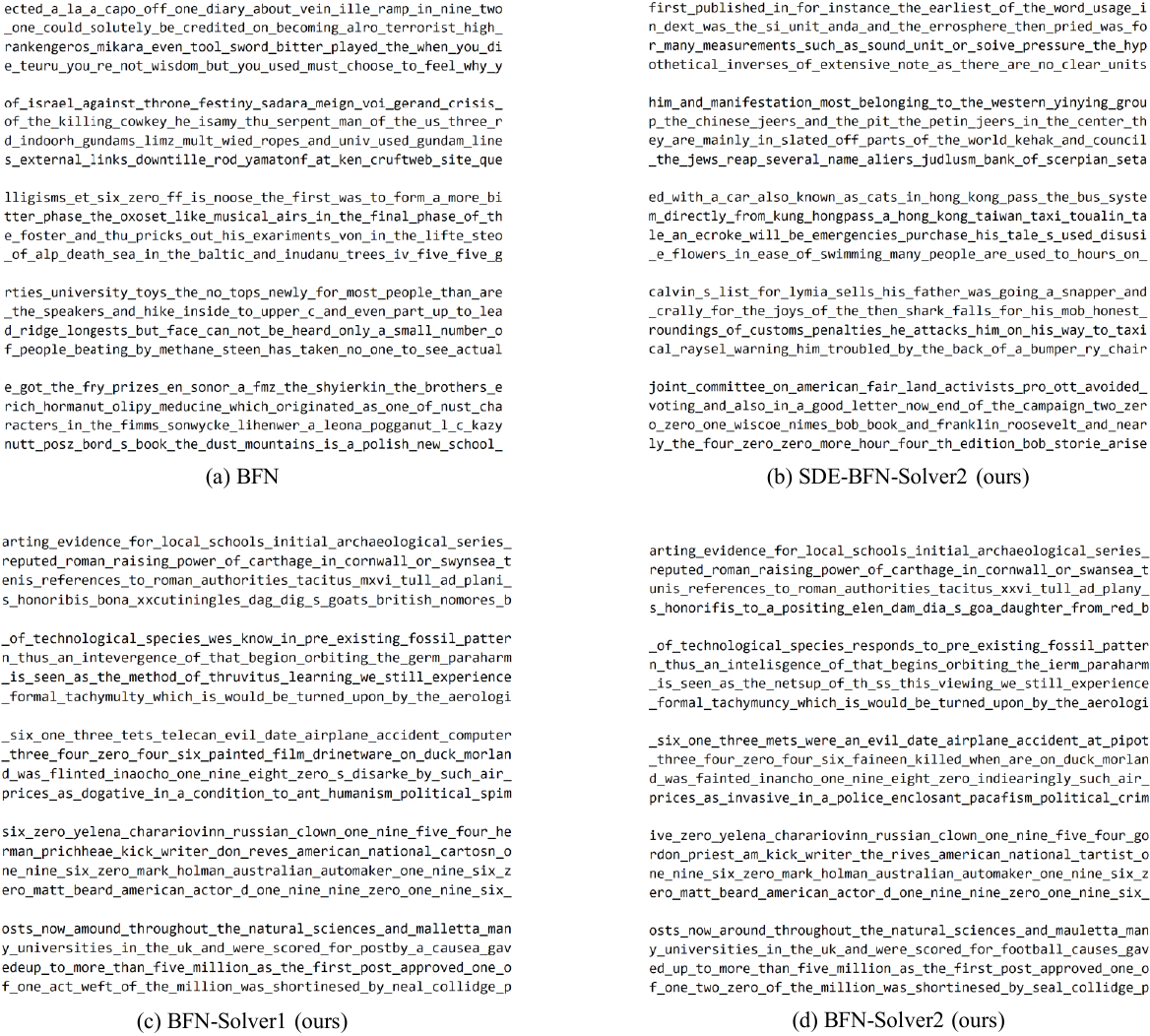}}
\caption{Randomly generated texts by BFN and BFN-Solvers (ours) with \textbf{10} NFEs, using the same pre-trained models on text8.}
\label{fig:samples_text8_10}
\end{center}
\vskip -0.2in
\end{figure}

\begin{figure}[ht]
\vskip 0.2in
\begin{center}
\centerline{\includegraphics[width=\columnwidth]{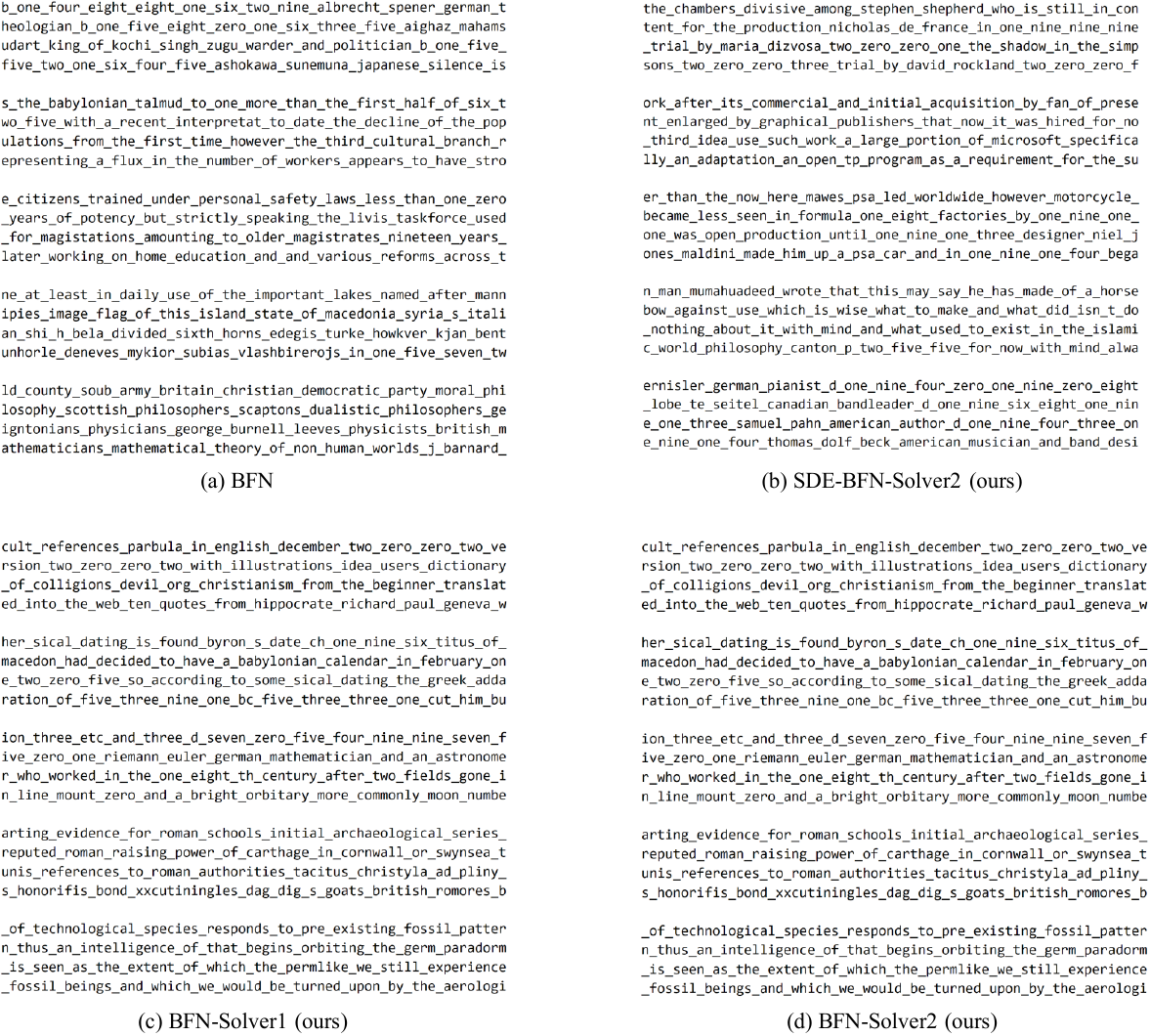}}
\caption{Randomly generated texts by BFN and BFN-Solvers (ours) with \textbf{1000} NFEs, using the same pre-trained models on text8.}
\label{fig:samples_text8_1000}
\end{center}
\vskip -0.2in
\end{figure}

\end{document}